\definecolor{darkblue}{rgb}{0, 0, 0.5}
\newcommand{\PP}{\mathbb{P}}
\newcommand{\R}{\mathbb{R}}
\newcommand{\E}{\mathbb{E}}
\newcommand{\Var}{\text{Var}}
\newcommand{\op}{\mathrm{op}}
\newtheorem{theorem}{Theorem}[section]
\newtheorem{lemma}[theorem]{Lemma}
\newtheorem{assumption}[theorem]{Assumption}
\title{CDE: Curiosity-Driven Exploration for Efficient Reinforcement Learning in Large Language Models}
\author{Runpeng Dai$^{1,3}$$^\dagger$, Linfeng Song$^1$$^\dagger$, Haolin Liu $^{1,4}$, Zhenwen Liang$^1$, Dian Yu$^1$, Haitao Mi$^1$, \\
\textbf{Zhaopeng Tu$^2$, Rui Liu$^{1,5}$, Tong Zheng$^{1,5}$, Hongtu Zhu$^3$, Dong Yu$^1$}
\vspace{1em}\\ 
$^1$Tencent AI Lab,~~
$^2$Tencent Multimodal Department,\\
$^3$University of North Carolina at Chapel Hill,\\
$^4$University of Virginia,\\
$^5$University of Maryland, College Park
\\
$\dagger$ Core contributors 
\\
\texttt{runpeng@unc.edu}, \texttt{lfsong@global.tencent.com}
}
\begin{document}

\maketitle

\begin{abstract}
Reinforcement Learning with Verifiable Rewards (RLVR) is a powerful paradigm for enhancing the reasoning ability of Large Language Models (LLMs). Yet current RLVR methods often explore poorly, leading to premature convergence and entropy collapse. 
To address this challenge, we introduce \textbf{Curiosity-Driven Exploration (CDE)}, a framework that leverages the model's own intrinsic sense of curiosity to guide exploration. We formalize curiosity with signals from both the actor and the critic: for the actor, we use perplexity over its generated response, and for the critic, we use the variance of value estimates from a multi-head architecture. Both signals serve as an exploration bonus within the RLVR framework to guide the model. Our theoretical analysis shows that the actor-wise bonus inherently penalizes overconfident errors and promotes diversity among correct responses; moreover, we connect the critic-wise bonus to the well-established count-based exploration bonus in RL. Empirically, our method achieves an approximate \textbf{+3} point improvement over standard RLVR using GRPO/PPO on AIME benchmarks. Further analysis identifies a \textbf{calibration collapse} mechanism within RLVR, shedding light on common LLM failure modes.
\end{abstract}

\section{Introduction}
The reasoning ability of Large Language Models (LLMs) has achieved remarkable performance across diverse application domains such as mathematics \citep{shao2024deepseekmath} and coding \citep{guo2024deepseek}. A central challenge in this development is how to efficiently elicit high-quality Chain-of-Thought (CoT) reasoning. A major breakthrough earlier this year was the introduction of reinforcement learning with verifiable rewards (RLVR), a training paradigm in which models are optimized directly using the signal of final-answer correctness. This approach removes the burden of designing and training potentially fragile reward models. Despite the emergence of various RLVR training algorithms, such as GRPO \citep{guo2024deepseek} and DAPO \citep{yu2025dapo}, key issues remain. In particular, problems such as premature convergence and phenomena like entropy collapse \citep{cui2025entropy} have been widely observed during training, posing fundamental challenges to the stability and effectiveness of RLVR.

These challenges stem from the classic exploration-exploitation dilemma in reinforcement learning \citep{SuttonRL}. Phenomena like entropy collapse reveal a critical flaw in the training process: it is heavily biased towards exploitation, causing models to converge prematurely instead of sufficiently exploring their environment for better solutions. Although the RL literature encompasses a wide range of exploration strategies, these methods exhibit significant limitations when applied to LLMs. Simple heuristics, including entropy bonuses \citep{haarnoja2018soft} and $\epsilon$-greedy policies \citep{SuttonRL}, either injecting randomness to the environment or encouraging the policy to be more stochastic. These methods are either provably suboptimal in theory \citep{dann2022guarantees} or demonstrate debatable effectiveness in complex environments like Deep RL \citep{andrychowicz2021matters} and LLM-based reasoning \citep{cui2025entropy, shen2025entropy}. More principled methods are count-based, which incentivize visiting rarely explored state–action pairs. Algorithms such as UCB \citep{lai1987adaptive} for multi-armed bandits, LinUCB \citep{li2010contextual}, LSVI-UCB \citep{jin2020provably} for linear bandits/MDPs achieve near-optimal exploration guarantees across a variety of settings. However, these methods (i) require computationally intensive operations such as matrix inversion, and (ii) heavily depend on highly expressive representations of state-action pair (reasoning paths), which become impractical for reasoning-focused LLMs with long chains of thought. Thus, developing efficient and scalable exploration methods for LLMs remains a key open challenge.

\begin{wrapfigure}{r}{0.3\textwidth}
    \vspace{-15pt}
    \centering
    \includegraphics[width=\linewidth]{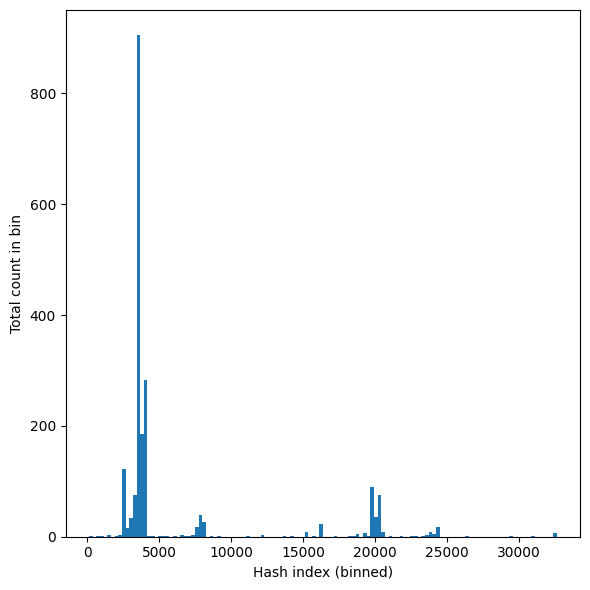}
    \caption{Distribution of number of visitations across hash-cells.}
    \vspace{-20pt}
    \label{fig:pie}
\end{wrapfigure}

In our preliminary experiments, we investigate the direct application of count-based exploration methods to the RLVR setting. To avoid the computational burden of matrix inversion, we adopt the SimHash technique \citep{tang2017exploration}, which maps the embedding of a CoT response into a discrete hash code, and then uses the visitation frequency of hash cells as pseudo-counts (see Section~\ref{sec:count} for details). However, as illustrated in Figure~\ref{fig:pie}, this approach proves problematic: it is difficult to meaningfully characterize a complex CoT reasoning trajectory with a fixed embedding vector. In practice, most responses collapse into the same or neighboring hash grids, leading to a highly concentrated distribution of counts and thus undermining the effectiveness of count-based exploration for RLVR.

In this work, we propose an intuitive approach that leverages the model's intrinsic sense of \textbf{curiosity} as a guide for exploration. An LLM, having been trained on vast reasoning corpora, develops a sophisticated internal model of what constitutes a familiar versus a novel reasoning pattern. This parallels early childhood development~\citep{chu2020play}, where learning is not driven by a external summary and count of experiences, but is instead propelled by an intrinsic curiosity to explore novel situations. We formalize this principle in our \textbf{Curiosity-Driven Exploration (CDE)} framework, which considers curiosity signals from both the actor and the critic. For the actor, perplexity (PPL) over its generated response serves as the curiosity measure. For the critic, we measure curiosity via the variance of its posterior value distribution. We then approximate this posterior by extending the PPO framework with a multi-head, bootstrapped structure. The curiosity signals are served as an exploration bonus, shaping the reward and advantage functions to effectively guide exploration.

Our theoretical analysis offers further insights into the properties of our method. (i) Theorem \ref{thm:calibration} interprets the proposed perplexity-based bonus, showing that it intrinsically penalizes overconfident errors while encouraging diversity among correct responses. (ii) Theorem \ref{thm:consistency} establishes that in the linear MDP setting, our critic-based exploration bonus is theoretically equivalent to classical count-based bonuses, grounding our approach in established exploration principles.

Our empirical evaluation demonstrates consistent performance gains across four widely used mathematics benchmarks (AIME25, AIME24, AMC23, and MATH), including an approximate \textbf{+3 point} improvement on the challenging AIME benchmarks. Furthermore, our analysis of the training process supports our theoretical findings and reveals a phenomenon we term \textbf{calibration collapse}: under a naive GRPO policy, the model's confidence progressively decouples from its correctness, while adding PPL bonus mitigates this miscalculation.

\section{Preliminaries: RLVR, GRPO and PPO}
We formulate the language generation process of LLMs as a sequential decision-making problem \citep{yu2025dapo,yue2025vapo}. Specifically, we consider two reinforcement learning algorithms: \emph{Group Relative Policy Optimization} (GRPO), a critic-free method, and \emph{Proximal Policy Optimization} (PPO), a canonical actor–critic method. We adopt the training paradigm of Reinforcement Learning with Verifiable Rewards (RLVR) \citep{guo2025deepseek,lambert2024tulu} and utilize a rule-based verifier to compare the generated response with the ground truth to judge its correctness.

\subsection{Group Relative Policy Optimization (GRPO, \citealt{shao2024deepseekmath})}

GRPO is an REINFORCE-style optimization algorithm. Let $\pi_{\theta}$ denote the LLM policy with parameters $\theta$. At each training step, given a prompt $q$ sampled from the dataset $\mathcal{D}$, the current policy $\pi_{\theta}$ generates a group of $G$ candidate outputs $\{o_1, o_2, \ldots, o_G\}$. For each candidate $o_i$, we compute its total reward $r_i = r(o_i, q)$.

The advantage for each output is computed by normalizing its reward with respect to the group's rewards:
\begin{equation*}
    A_{i} = \frac{r_i - \operatorname{mean}(r_1, \ldots, r_G)}{\operatorname{std}(r_1, \ldots, r_G) + \delta},
\end{equation*}
where $\delta$ is a small constant for numerical stability. The same advantage $A_i$ is applied to all tokens in $o_i$. Let $\pi_{\theta_{\text{old}}}$ be the policy from the previous step and $\pi_{\text{ref}}$ the original pre-trained model. GRPO maximizes:
\begin{equation*}
    \mathcal{L}_{\text{GRPO}}(\theta) = \mathbb{E}_{q \sim \mathcal{D}, \{o_i\} \sim \pi_{\theta_{\text{old}}}}
    \left[ \frac{1}{G} \sum_{i=1}^G \frac{1}{|o_i|} \sum_{t=1}^{|o_i|} \mathcal{L}_{\theta}(\tilde{r}_{i,t}, A_i) \right]
    - \beta D_{\text{KL}}\left(\pi_{\theta} \| \pi_{\text{ref}}\right),
\end{equation*}
where the clipped objective is
\begin{equation*}
    \mathcal{L}_{\theta}(\tilde{r}_{i,t}, A_i) =
    \min\left( \tilde{r}_{i,t} A_i,
    \text{clip}(\tilde{r}_{i,t}, 1-\varepsilon, 1+\varepsilon) A_i \right), \quad
    \tilde{r}_{i,t} = \frac{\pi_{\theta}(o_{i,t} \mid q, o_{i,<t})}{\pi_{\theta_{\text{old}}}(o_{i,t} \mid q, o_{i,<t})}.
\end{equation*}
Here, $\varepsilon$ and $\beta$ control the ratio clipping threshold and the KL-penalty strength, respectively. The clipping mitigates large, unstable policy updates, while the KL term constrains deviation from $\pi_{\text{ref}}$.

\subsection{Proximal Policy Optimization (PPO, \citealt{schulman2017proximal})}

PPO is an actor–critic algorithm that maintains both a policy (actor) $\pi_{\theta}$ and a value function (critic) $V_{\phi}$ with parameters $\phi$, estimating the expected total reward from a given state (prompt and sequence prefix). The advantage function in PPO leverages the critic to reduce variance. Specifically, \textbf{Generalized Advantage Estimation (GAE)} is applied to compute token-level advantages. For an output $o_i$ with sentence-level reward $r_i$, the GAE at token $t$ is:
\begin{equation*}
    A_{i,t} = \sum_{l=t}^{|o_i|} (\gamma\lambda)^{l-t} \delta_{i,l},
\end{equation*}
where
\begin{equation*}
    \delta_{i,l} = r_{i,l} + \gamma V_{\phi}(q, o_{i, \leq l+1}) - V_{\phi}(q, o_{i, \leq l}),
\end{equation*}
and in our setting $r_{i,l} = 0$ for all non-terminal tokens, with $r_{i,|o_i|} = r_i$. The hyperparameters $\gamma$ and $\lambda$ are the discount factor and GAE trace-decay, respectively. The PPO objective is:
\begin{equation*}
    \mathcal{L}_{\text{PPO}}(\theta, \phi) =
    \mathbb{E}_{q \sim \mathcal{D}, \{o_i\} \sim \pi_{\theta_{\text{old}}}}
    \Big[ \frac{1}{|o_i|} \sum_{t=1}^{|o_i|}  \left[\mathcal{L}_{\theta}(\tilde{r}_{i,t}, A_{i,t}) - c_1 \mathcal{L}_{\phi}(q,o_{i,<t}, r_i) \right]\Big]
    - \beta D_{\text{KL}}\left(\pi_{\theta} \| \pi_{\text{ref}}\right),
\end{equation*}
where $\mathcal{L}_{\theta}$ is as in GRPO but with per-token $A_{i,t}$, and the value loss is:
\begin{equation*}
    \mathcal{L}_{\phi}(\phi) = \left( V_{\phi}(q, o_{i,<t}) - r_i \right)^2.
\end{equation*}
In practice, we alternate optimization of the actor ($\theta$) and the critic ($\phi$).

\section{CDE: Curiosity-Driven Exploration}
\label{sec:method}

In this section, we first explore count-based exploration and identify two of its key challenges. To overcome these challenges, we propose Curiosity-Driven Exploration (CDE), a systematic framework that considers curiosity signals from both the actor and the critic. We introduce the detailed formulations of actor and critic curiosity in Section \ref{sec:actor} and Section \ref{sec:critic}, respectively.

\subsection{Challenge of Count-based exploration for RLVR}
\label{sec:count}
The core idea of count-based exploration is to measure the occurrence of Chain-of-Thought (CoT) patterns via sentence embeddings, and to assign an exploration bonus to rarely occurred CoTs. While conceptually appealing, this approach faces two major challenges:

\begin{itemize}[left=0pt]
    \item \textbf{Curse of dimensionality:} Classical count-based methods (e.g., LSVI-UCB \citep{jin2020provably}, CFPO \citep{cassel2024warm}) rely on computing the inverse of the covariance matrix $\Lambda_t^{-1}$ (see Appendix \ref{sec:LinMDP}) to construct historical visitation ellipsoids. For high-dimensional embeddings, this operation is computationally prohibitive.
\end{itemize}

To circumvent the need for matrix inversion, we investigated hash-based counts \citep{tang2017exploration} (details in Appendix \ref{sec:hash}), which project sentence embeddings into discrete hash grids and treat grid visitation frequency as a proxy for counts. However, this alternative introduces a second limitation:

\begin{itemize}[left=0pt]
    \item \textbf{Poor expressiveness of embeddings:} As illustrated in Figure \ref{fig:pie}, after hash coding, most CoT embeddings collapse into neighboring hash grids. This clustering highlights the limited ability of sentence embeddings to distinguish between diverse reasoning patterns, leading to ineffective exploration.
\end{itemize}

In this work, we move beyond the paradigm of explicit state–action counts and instead utilize the model's own measure of novelty. Our approach is motivated by the key intuition that agents, much like children in early cognitive development \citep{chu2020play}, exhibit a form of \textit{curiosity}. They respond with confidence when revisiting familiar states or CoT patterns, yet display uncertainty and exploratory behavior when confronted with novel situations. This learning process is not driven by a external count of experiences but is instead propelled by an intrinsic drive to explore. We first detail the implementation of the actor's curiosity before turning to the critic's.

\subsection{Exploration Guided by Actor Curiosity}
\label{sec:actor}

We model actor curiosity as the actor's uncertainty about its own actions. Intuitively, a response that is surprising to the actor—i.e., has a low probability under its current policy—likely resides in an underexplored region of its learned distribution.

A natural and computationally efficient measure of this surprise is the perplexity of the actor's generation. We formalize this as a sentence-level curiosity bonus, defined as the negative average log-probability of a generated sentence $o=\{o_1, \ldots, o_T\}$, given a prompt $q$:
\begin{equation}
    B_{actor}(q,o) = -\frac{1}{T} \sum_{t=1}^T \log \pi(o_t| o_{<t}, q)
    \label{eq:actor_bonus}
\end{equation}
where $\pi$ denotes the actor policy. A higher value for $B_{actor}(q,o)$ indicates greater surprise and thus a stronger intrinsic reward signal for exploration. 

However, practically simply adding this bonus to the original reward can be unstable and sub-optimal. Unconstrained exploration might incentivize the model to generate high-perplexity but low-quality or inaccurate responses (a behavior known as reward hacking), or lead to over-exploration where the policy fails to converge to a stable, high-quality output. To ensure that exploration remains tethered to the primary objective of maximizing the original reward signal, we integrate the bonus using an adaptive clipping mechanism. The total sentence-level reward, $\tilde{r}$, is a combination of the original reward signal $r(q,o)$ and the curiosity bonus $B_{actor}(q,o)$, where the bonus is capped relative to the original reward:
\begin{equation}
    \tilde{r}(q, o) = r(q,o) + {\color{red} \omega_t} \min(\frac{|r(q,o)|}{\color{blue}\kappa}, {\color{green}\alpha} B_{actor}(q,o))
    \label{eq:actor_eq}
\end{equation}

This formulation promotes exploration by rewarding sentences that the actor finds surprising, while constraining the bonus to remain a fraction of the original reward. In this way, the model is discouraged from trading response quality for novelty. The behavior of this reward function is controlled by three key hyperparameters:

\begin{itemize}[leftmargin=*]
    \item The \textbf{bonus weight} $\omega_t$ is a dynamic coefficient, typically set with an annealing schedule to decrease over the course of training. This allows for more aggressive exploration in the early stages and then gradually shifts focus towards exploitation of high-reward regions as the policy converges.
    
    \item The \textbf{clipping ratio} $\kappa$ governs the maximum size of the curiosity bonus relative to the original reward. By capping the bonus at $|r(q,o)|/\kappa$, it ensures the bonus remains a supplement and prevents it from dominating the learning signal. This is particularly crucial when $r(q,o)$ is negative, as it guarantees the bonus cannot reverse the sign of the reward, maintaining the integrity of the penalty.
    
    \item The \textbf{bonus scaling factor} $\alpha$ normalizes the curiosity bonus $B_{\text{actor}}(q,o)$ before it is compared to the clipped reward. A higher $\alpha$ allows the curiosity bonus to reach the clipping threshold more easily, whereas a smaller $\alpha$ diminishes its potential impact.
\end{itemize}

\paragraph{Intuitions and Theoretical Foundation}

While the above formulation specifies how the perplexity bonus is shaped and controlled through its hyperparameters, it is equally important to understand its qualitative effect on model behavior. To build this intuition, we analyze responses along two axes: correctness and actor perplexity. Among these four categories, two require particular attention:

\begin{minipage}{0.6\textwidth}
\begin{enumerate}[left=0pt]
    \item Incorrect responses with low PPL indicate that the model is highly confident in its answer, yet the response is wrong. This reflects overfitting and should be penalized.
    \item Correct responses with high PPL suggest that the model is less familiar with such answers, but they nevertheless turn out to be successful. This reflects effective exploration and should be encouraged.
\end{enumerate}
\end{minipage}\hfill
\begin{minipage}{0.35\textwidth}
    \centering
    \includegraphics[width=0.8\linewidth]{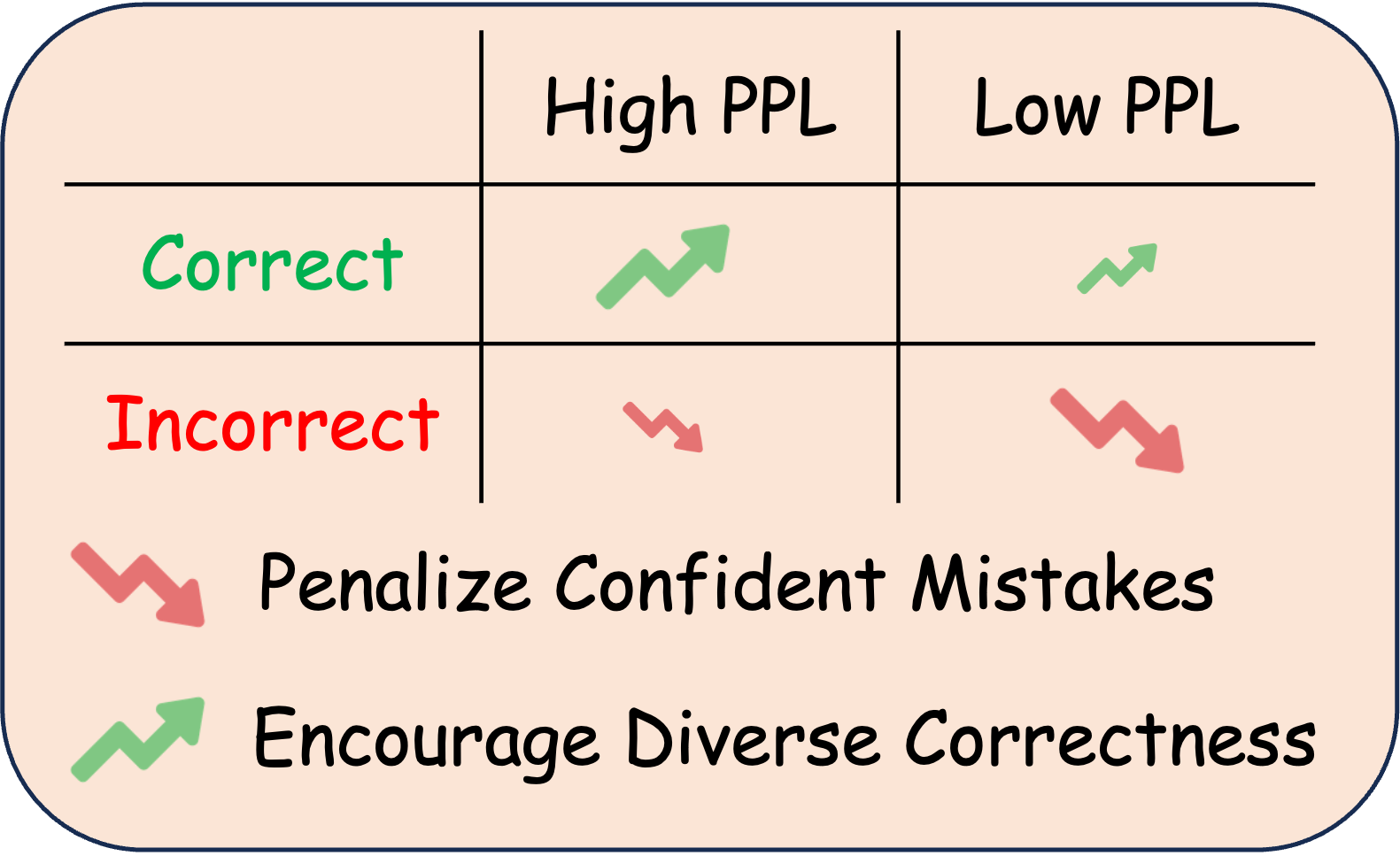}
    \captionof{figure}{Responses by correctness and avg PPL.}
    \label{fig:axis}
\end{minipage}

As illustrated in Figure~\ref{fig:axis}, we find out that the PPL bonus intrinsically penalizes confident mistakes while encouraging novel correct responses. For correct responses, those novel responses (with higher PPL) receive a larger positive reward. For incorrect responses, those confident responses (with lower PPL) receive larger penalty as it receives smaller PPL bonus. The following theorem formalizes this intuition; its precise statement and proof are deferred to Appendix~\ref{sec:calibration}.

\begin{theorem}
\label{thm:calibration}
Let $\pi_t$ denote the policy at training step t. With PPL bonus in Equation \eqref{eq:actor_bonus}, the update to $\pi_{t+1}$ calibrates the policy's confidence as follows:

(i) Among correct responses, trajectories with higher perplexity receive a larger relative probability increase.

(ii) Among incorrect responses, trajectories with lower perplexity receive a larger relative probability decrease.
\end{theorem}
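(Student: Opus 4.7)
The plan is to reduce the claim to a monotonicity property of the shaped reward $\tilde{r}(q,o)$ defined in Equation~\eqref{eq:actor_eq}, and then lift this monotonicity to the induced policy update via the standard first-order view of the GRPO/PPO step. Concretely, when the importance ratio stays inside the clipping region $[1-\varepsilon,1+\varepsilon]$, one step with learning rate $\eta$ can, to leading order, be written as a softmax / mirror-descent update
\begin{equation*}
    \pi_{t+1}(o\mid q) \;\propto\; \pi_t(o\mid q)\,\exp\bigl(\eta\, \tilde{A}(q,o)\bigr),
\end{equation*}
where $\tilde{A}(q,o)=(\tilde{r}(q,o)-\bar{r})/(s+\delta)$ is the group-normalized advantage built from $\tilde{r}$. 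Because $\bar{r}$ and $s$ are shared within the group, and the softmax normalizer $Z$ is shared across all responses, the ordering of relative log-probability changes $\log\pi_{t+1}(o\mid q)-\log\pi_t(o\mid q)$ coincides with the ordering of the shaped rewards $\tilde{r}(q,o)$ within the group.

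I would then analyze how $\tilde{r}(q,o)$ depends on $B_{actor}(q,o)$ within each correctness class. The map $B\mapsto \omega_t\min(|r|/\kappa,\alpha B)$ is non-negative, piecewise linear, and strictly increasing on the unclipped regime $\alpha B<|r|/\kappa$. For two correct responses (say $r=+1$), larger $B_{actor}$ yields strictly larger $\tilde{r}$, hence strictly larger advantage, hence strictly larger log-probability increase, giving Case~(i). For two incorrect responses (say $r=-1$), the non-negative bonus is added to a negative base reward; thus smaller $B_{actor}$ yields a smaller bonus, a more negative $\tilde{r}$, a more negative advantage, and consequently a larger relative probability decrease, giving Case~(ii). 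In the saturated regime $\alpha B\ge |r|/\kappa$ the bonus no longer depends on $B$, so the monotonicity is strict off the clip and weak on it; I would record this as a remark.

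Next I would verify that two technical features of the GRPO/PPO loss do not disturb the argument. First, the importance-ratio clip cannot reverse the ordering: inside the clip the surrogate is linear in $\tilde{A}$, and at the clip boundary each sample's contribution is a monotone function of $\tilde{A}$ of the same sign, so larger advantage still produces a larger (or equal) update. Second, the per-token averaging $\tfrac{1}{|o_i|}\sum_t \mathcal{L}_\theta(\tilde r_{i,t},A_i)$ applies the same sentence-level advantage to every token, so the aggregated sentence log-probability change $\sum_t[\log\pi_{t+1}(o_t\mid o_{<t},q)-\log\pi_t(o_t\mid o_{<t},q)]$ inherits monotonicity in $A_i$.

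The main obstacle is not the algebraic monotonicity, which is essentially immediate from the piecewise-linear structure of Equation~\eqref{eq:actor_eq}, but rather pinning down what ``relative probability increase'' means beyond an idealized update. I would handle this by phrasing the appendix proof as a first-order / small-step claim, valid to leading order in $\eta$ and exactly under the mirror-descent formulation, which is the standard interpretation of policy-gradient calibration statements in the RL literature. A secondary subtlety is that the group-level normalizer depends on the empirical mean and variance of rewards in the batch; since these quantities are shared among all responses in the group, they cancel in any within-group relative comparison, which is precisely the setting of both parts of the theorem.
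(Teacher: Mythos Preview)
Your proposal is correct and follows essentially the same route as the paper: cast the update as a KL-regularized/mirror-descent step $\pi_{t+1}(o\mid q)\propto\pi_t(o\mid q)\exp(\eta\,\tilde r)$, so that the log-probability change equals $\eta\,\tilde r$ up to a shared normalizer, and then read off the within-class monotonicity of $\tilde r$ in $B_{actor}$ from the piecewise-linear $\min$ structure of the bonus. The paper works directly with $\tilde r$ (rather than the group-normalized advantage) and phrases the update as the exact closed-form solution of the KL-regularized problem rather than a first-order approximation; your additional remarks on the importance clip and per-token averaging go beyond what the paper's appendix actually argues but are consistent with it.
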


\begin{minipage}{0.35\textwidth}
    \centering
    \includegraphics[width=0.8\linewidth]{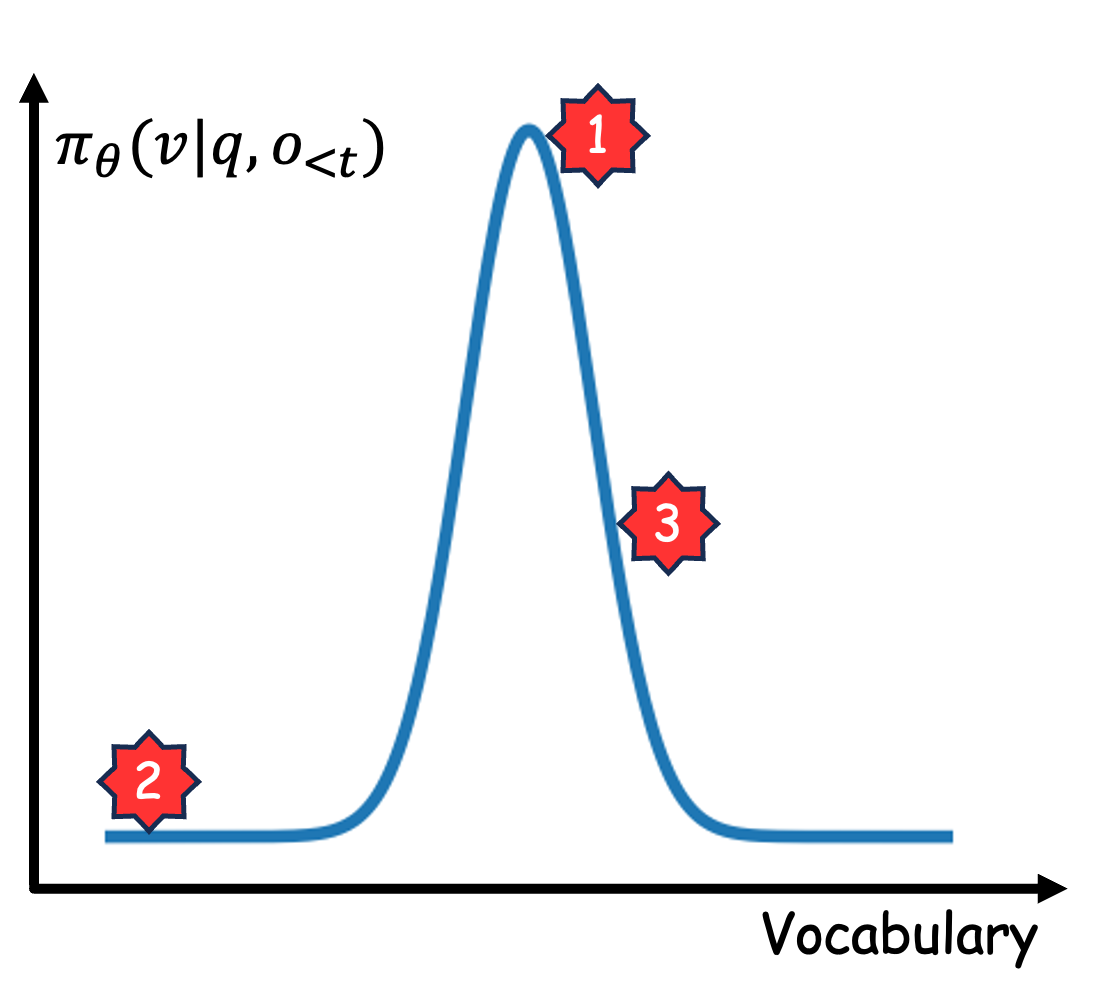}
    \captionof{figure}{An illustration of the model's policy distribution for selecting the next token.}
    \label{fig:entropy_fig}
\end{minipage}\hfill
\begin{minipage}{0.65\textwidth}
Previous analyses distinguish the PPL bonus from the entropy bonus, which is sample-agnostic at the token level. The entropy at any given step depends solely on the policy's probability distribution and is independent of the token ultimately sampled. Because the calculation considers the entire next-token distribution $\pi_{\theta} \left(v \mid q, o_{<t}\right)$ (Equation \ref{eq:Entropy}), the bonus $\mathcal{H}_t$ remains constant for any potential outcome (Figure \ref{fig:entropy_fig}). Therefore, even when the model makes a high-confidence error by sampling token 1, the entropy bonus fails to penalize that choice.
\begin{equation}
    \mathcal{H}_t = - \sum_{v \in \mathcal{V}} \pi_{\theta} \left(v \mid q, o_{<t}\right)\log \pi_{\theta} \left(v \mid q, o_{<t}\right).
    \label{eq:Entropy}
    \end{equation}
\end{minipage}

\subsection{Exploration Guided by Critic Curiosity}
\label{sec:critic}
In contrast to critic-free methods such as REINFORCE and GRPO, the critic (value function) in actor–critic frameworks provides a higher-level understanding of the prompt–response pair by estimating the expected reward-to-go. Since this estimate is learned directly from collected trajectories, its posterior distribution conditioned on the observed data naturally reflects the degree of coverage: regions with dense data yield concentrated (low-variance) posteriors, whereas sparsely sampled regions result in higher uncertainty. Posterior distributions are a well-established means of quantifying predictive uncertainty in deep learning models \citep{gal2016dropout, lakshminarayanan2017simple}. As shown in Figure~\ref{fig:dist}, the orange curve exhibits lower variance—evidence of better data coverage—whereas the other curve is more dispersed, reflecting greater uncertainty.

\begin{wrapfigure}{r}{0.3\textwidth}
    \vspace{-10pt}
    \centering
    \includegraphics[width=\linewidth]{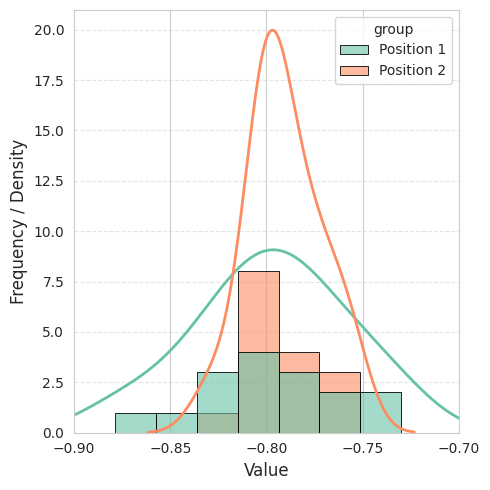}
    \caption{An illustration of two posterior distributions of the critic and their bootstrap approximations.}
    \vspace{-15pt}
    \label{fig:dist}
\end{wrapfigure}

To approximate the posterior distribution of value estimates, we adopt the classical bootstrap method \citep{davison1997bootstrap}, widely used in statistics and increasingly recognized in the RL community as an effective tool for exploration \citep{osband2016deep, ciosek2019better, bai2021principled}. We implement this idea through a multi-head critic (upper-left subfigure in Figure~\ref{fig:MHA_dynamic}), where $K$ critics $\{\widehat{V}_1, \ldots, \widehat{V}_K\}$ share a common LLM backbone. Each head is trained on a resampled subset of the collected trajectories (bottom subfigure in Figure~\ref{fig:MHA_dynamic}), thereby producing an empirical approximation to the posterior distribution.  

We then use the standard deviation across the $K$ heads as a principled curiosity signal, guiding the policy toward regions of high disagreement where the value function remains uncertain and under-explored. In the following theorem, we establish a surprising yet intuitive result: under a Linear MDP assumption, the standard deviation of the bootstrap critics is a consistent estimator of the pseudo-count bonus.

\begin{theorem}
In linear MDPs, the standard deviation across multi-head critics can serve as a consistent estimator for the pseudo-count exploration bonus, $\sqrt{\phi_{n,h}^\top \Lambda_{n,h}^{-1} \phi_{n,h}}$, as used in LSVI-UCB \citep{jin2020provably} and CFPO \citep{cassel2024warm}, where $\phi_{n,h} = \phi(s_{n,h}, a_{n,h})$ is the feature vector of a state-action pair and $\Lambda_{n,h} = \sum_{i=0}^n \phi_{i,h} \phi_{i,h}^\top + \lambda I$ is the coverage matrix.
\label{thm:consistency}
\end{theorem}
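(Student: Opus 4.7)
The plan is to leverage the closed-form least-squares solution for each bootstrap head in a linear MDP, and then show that the empirical variance across heads converges to the quadratic form $\phi_{n,h}^\top \Lambda_{n,h}^{-1} \phi_{n,h}$ (up to a noise-scale constant), so its square root matches the LSVI-UCB/CFPO pseudo-count bonus.

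First I would fix the setup. Under the Linear MDP assumption, each head $k\in\{1,\dots,K\}$ parameterizes its value estimate as $\widehat{V}_k(s,a)=\phi(s,a)^\top \widehat{w}_{k,h}$ at step $h$. Let $\{(s_i,a_i,y_i)\}_{i=1}^n$ denote the data collected through episode $n$, with $\phi_i=\phi(s_i,a_i)$ and regression targets $y_i$ formed (as in LSVI) from bootstrapped Bellman backups. Bootstrap resampling is encoded by nonnegative weights $z_{k,i}$ drawn i.i.d. across heads from a distribution with $\mathbb{E}[z_{k,i}]=1$ and $\mathrm{Var}(z_{k,i})=\tau^2$ (e.g.\ multinomial or Poisson bootstrap). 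Head $k$ minimizes $\sum_i z_{k,i}(\phi_i^\top w-y_i)^2+\lambda\|w\|^2$, which yields the closed form
\begin{equation*}
\widehat{w}_{k,h} \;=\; \Lambda_{k,n,h}^{-1}\sum_{i=1}^{n} z_{k,i}\,\phi_i\, y_i,\qquad \Lambda_{k,n,h}=\sum_{i=1}^{n} z_{k,i}\,\phi_i\phi_i^\top+\lambda I.
\end{equation*}

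Second, I would compute the across-head variance of $\phi_{n,h}^\top\widehat{w}_{k,h}$ conditional on the data. A standard bootstrap linearization replaces $\Lambda_{k,n,h}$ by its mean $\Lambda_{n,h}=\sum_i\phi_i\phi_i^\top+\lambda I$, incurring only an $o_P(1)$ error as $n\to\infty$ by a matrix law of large numbers. Then, writing $\mathrm{Cov}_z$ for the covariance across the bootstrap weights,
\begin{equation*}
\mathrm{Cov}_z(\widehat{w}_{k,h})\;\approx\;\Lambda_{n,h}^{-1}\Big(\sum_{i=1}^n \tau^2\, y_i^2\,\phi_i\phi_i^\top\Big)\Lambda_{n,h}^{-1}.
\end{equation*}
When the target scale is roughly homogeneous, $\sum_i y_i^2 \phi_i\phi_i^\top \approx \bar\sigma^2 \sum_i \phi_i\phi_i^\top$, so
\begin{equation*}
\mathrm{Var}_z\!\big(\phi_{n,h}^\top \widehat{w}_{k,h}\big)\;\approx\;\tau^2\bar\sigma^2\,\phi_{n,h}^\top\Lambda_{n,h}^{-1}\big(\Lambda_{n,h}-\lambda I\big)\Lambda_{n,h}^{-1}\phi_{n,h}\;=\;\tau^2\bar\sigma^2\,\phi_{n,h}^\top\Lambda_{n,h}^{-1}\phi_{n,h}+O(\lambda).
\end{equation*}

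Third, I would translate the population bootstrap variance into the empirical quantity actually computed by the algorithm. Let $\widehat\sigma_{n,h}^2(\phi_{n,h})$ denote the sample variance of $\{\phi_{n,h}^\top\widehat w_{k,h}\}_{k=1}^{K}$. A standard bootstrap/U-statistic argument gives $\widehat\sigma_{n,h}^2(\phi_{n,h})\to \mathrm{Var}_z(\phi_{n,h}^\top\widehat w_{k,h})$ in probability as $K\to\infty$, and combining with the display above yields
\begin{equation*}
\frac{1}{\tau\bar\sigma}\,\widehat\sigma_{n,h}(\phi_{n,h})\;\xrightarrow{P}\;\sqrt{\phi_{n,h}^\top\Lambda_{n,h}^{-1}\phi_{n,h}},
\end{equation*}
which is exactly the pseudo-count bonus of LSVI-UCB and CFPO up to the absorbable constant $\tau\bar\sigma$.

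The main obstacle I anticipate is the second step: justifying the linearization $\Lambda_{k,n,h}\approx \Lambda_{n,h}$ rigorously, and in particular showing the homogenization $\sum_i y_i^2\phi_i\phi_i^\top\approx \bar\sigma^2\sum_i\phi_i\phi_i^\top$ with explicit control over the error. I would handle this by invoking the matrix Bernstein inequality for the weighted sum $\sum_i(z_{k,i}-1)\phi_i\phi_i^\top$ to bound the perturbation in operator norm, and by introducing a mild heteroscedasticity assumption (or a reweighting of heads) so the $y_i^2$ factor can be replaced by a common scale. All remaining steps are routine algebra and standard bootstrap consistency arguments.
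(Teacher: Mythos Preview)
Your overall strategy---closed-form ridge solution per head, linearize, compute bootstrap variance, then pass to the empirical standard deviation---is the same route the paper takes. But the linearization you perform in the second step is too coarse, and this is where the argument actually breaks, not merely where it needs polish.

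When you freeze $\Lambda_{k,n,h}\approx\Lambda_{n,h}$ while leaving the full numerator $\sum_i z_{k,i}\phi_i y_i$ random, you ignore that the \emph{same} weights $z_{k,i}$ perturb both factors. The correct first-order expansion of $\widehat w_{k,h}-\widehat w_{n,h}$ keeps both perturbations and they partially cancel:
\begin{equation*}
\widehat w_{k,h}-\widehat w_{n,h}
=\Lambda_{n,h}^{-1}\sum_{i}(z_{k,i}-1)\,\phi_i\bigl(y_i-\phi_i^\top\widehat w_{n,h}\bigr)+\text{higher order},
\end{equation*}
so the leading bootstrap covariance involves the \emph{residuals} $\widehat\varepsilon_i=y_i-\phi_i^\top\widehat w_{n,h}$, not the raw targets $y_i$. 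Your formula with $y_i^2$ differs from the correct one by a term of the same $O(1/n)$ order as $\phi^\top\Lambda_{n,h}^{-1}\phi$, so the replacement is not $o_P(1)$ at the scale that matters. (Equivalently: $\Lambda_{k,n,h}-\Lambda_{n,h}$ and $\sum_i(z_{k,i}-1)\phi_i y_i$ are both $O_P(\sqrt{n})$, hence the denominator fluctuation cannot be discarded relative to the numerator fluctuation.) Under the linear-MDP noise model $y_i=\phi_i^\top w^\star_h+\varepsilon_i$ with homoscedastic $\varepsilon_i$, the residual version gives $\sigma^2\sum_i\phi_i\phi_i^\top$ and hence $\sigma^2\,\phi^\top\Lambda_{n,h}^{-1}\phi$ directly---no ``target homogeneity'' assumption is needed. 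The paper carries out precisely this cancellation, writing $G_{i,h}=\phi_{i,h}^\top w^\star_h+\varepsilon_{i,h}$ and showing that the remaining piece $\Lambda_{n,h}^{-1}\Delta\Sigma\,(w^\star_h-\widehat w_{n,h})$ is $O_P(1/n)$ because $\|\widehat w_{n,h}-w^\star_h\|=O_P(1/\sqrt{n})$.

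Consequently, your identified ``main obstacle''---justifying $\sum_i y_i^2\phi_i\phi_i^\top\approx\bar\sigma^2\sum_i\phi_i\phi_i^\top$---is a symptom of the wrong expansion rather than a genuine technical hurdle; once the linearization is done jointly on numerator and denominator, that step is replaced by the benign fact that squared residuals concentrate at $\sigma^2$. A secondary difference: the paper models the heads via subsampling without replacement with fraction $\zeta$ (matching the algorithm in Section~\ref{sec:critic}), which produces the explicit constant $\sqrt{(1-\zeta)/\zeta}\,\sigma$, whereas your multiplier-weight formulation would give $\tau\sigma$; either is fine for the stated consistency, but the former is what the method actually implements.
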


The rigorous formulation of the linear MDP assumptions and the proof of Theorem~\ref{thm:consistency} are provided in Appendix~\ref{sec:LinMDP}, while empirical results in Section~\ref{sec:MHC_analysis} further support this finding. Building on this foundation, we now describe the training procedure of the multi-head PPO algorithm, which follows the standard stages of vanilla PPO: (i) generating trajectories with the actor, (ii) updating the actor, and (iii) updating the critic. The key distinction is that we incorporate the multi-head variance as an exploration bonus, encouraging the policy to visit under-explored regions. A visual illustration of these steps is shown in Figure~\ref{fig:MHC}.

\begin{figure*}[t!]
    \centering
    \includegraphics[width=0.9\linewidth]{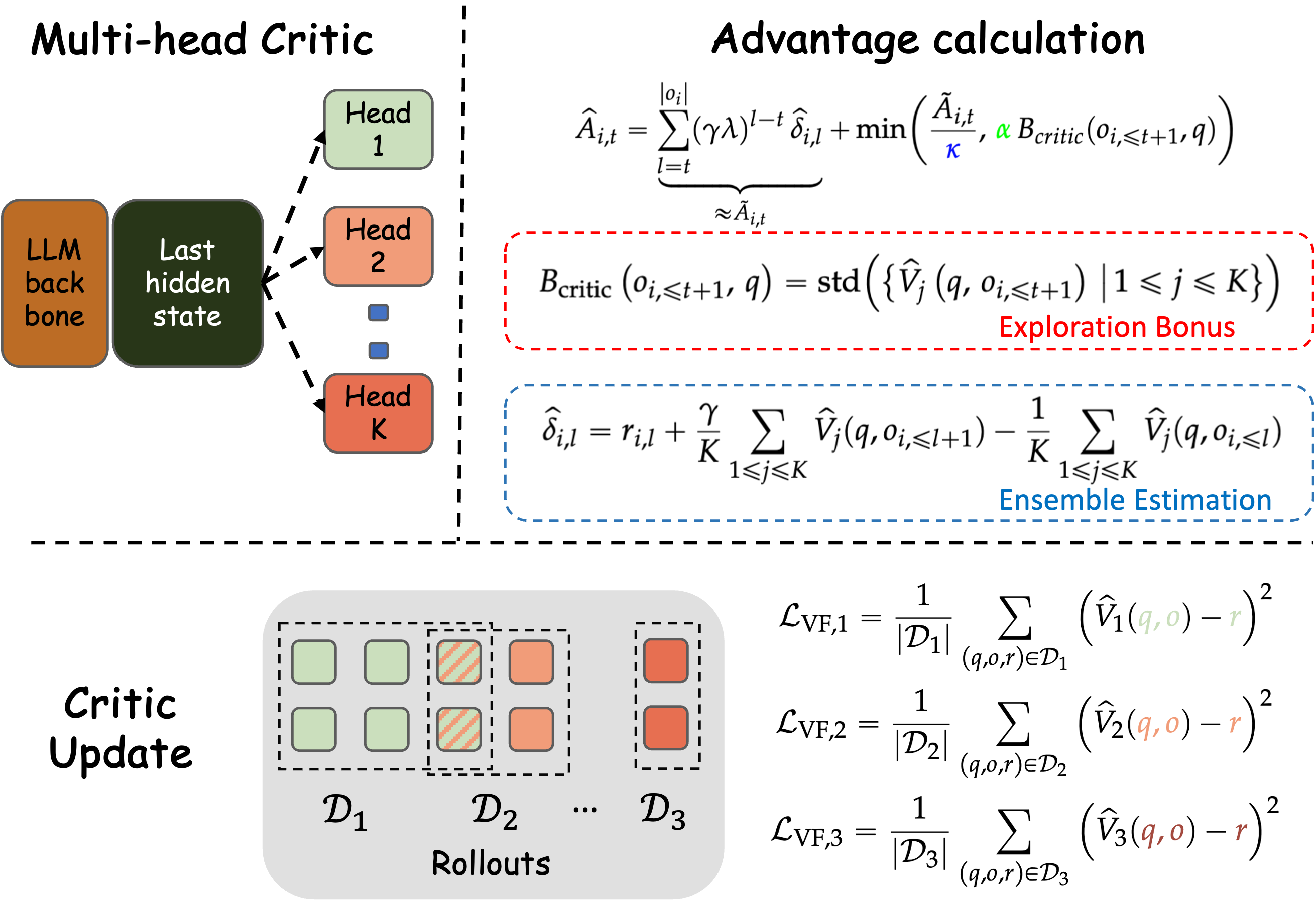}
    \caption{Illustration of the multi-head critic framework.}
    \label{fig:MHC}
\end{figure*}

\begin{itemize}
    \item \textbf{Actor roll-out:}  
        Given a prompt $q$, the actor generates a set of responses $\{o_1, \ldots, o_n\}$. Each response is denoted as $o_i = \{o_{i,1}, \ldots, o_{i,|o_i|}\}$. Correspondingly, we associate each response with a verifiable reward $r_i$. For clarity, we focus on the case of a single prompt $q$.  
    
    \item \textbf{Actor update:}  
        In this step, the advantage is estimated as  
        \begin{equation}
        \widehat{A}_{i,t} =  
        \underbrace{\sum_{l=t}^{|o_i|} (\gamma\lambda)^{l-t} \widehat{\delta}_{i,l}}_{\approx \tilde{A}_{i,t}}
        + {\color{red}\omega_t}\min\left(\frac{|\tilde{A}_{i,t}|}{\color{blue}\kappa},{\color{green}\alpha}B_{\text{critic}}(q, o_{i,\leq t+1})\right).
        \label{eq:rlvr_critic}
        \end{equation}
        
        The advantage consists of two components. The first term, $\tilde{A}_{i,t}$, largely follows the standard advantage estimation in PPO, except that we exploit bootstrap estimators by using an \emph{ensemble} of value functions rather than a single point estimate:  
        \begin{equation*}
        \widehat{\delta}_{i,l} = r_{i,l} 
        + \frac{\gamma}{K}\sum_{j=1}^K \widehat{V}_j(q, o_{i, \leq l+1})
        - \frac{1}{K}\sum_{j=1}^K \widehat{V}_j(q, o_{i, \leq l}).
        \end{equation*}
        
        The second term of Equation \ref{eq:rlvr_critic} introduces the \emph{multi-head critic bonus} ($B_{\text{critic}}$), governed by the bonus weight $\omega_t$, clipping ratio $\kappa$, and scaling factor $\alpha$ (see discussion following Equation~\eqref{eq:actor_eq} for interpretation). Specifically, $B_{\text{critic}}$ is defined as the standard deviation across the $K$ value heads, encouraging exploration by assigning higher bonus to actions leading to uncertain/less-visited regions:  
        \begin{equation}
        B_{\text{critic}}\left(q, o_{i,\leq t+1}\right) 
        = \text{std}\left( \big\{ \widehat{V}_{j}(q, o_{i,\leq t+1}) \big| 1 \leq j \leq K \big\} \right).
        \end{equation}

    \item \textbf{Critic update:}  
        We use the collected roll-outs to update the critic. For notational convenience, let the dataset be  
        \begin{equation}
            \mathcal{D} = \{(q, o_{i,\leq t}, r_i) | i \in [n], t \in [|o_i|]\},
            \label{eq:dataset}
        \end{equation}
        consisting of (prompt, partial response, reward) triplets. For each critic head $j$, we sample without replacement a subset $\mathcal{D}_j \subset \mathcal{D}$ of size $|\mathcal{D}_j| = \zeta |\mathcal{D}|$, where the hyperparameter $\zeta \in (0,1]$ controls the fraction of data assigned per head. Smaller $\zeta$ increases head diversity, while larger $\zeta$ improves sample efficiency. The multi-head critic is then updated with the following bootstrap loss:  
        \begin{equation*}
        \mathcal{L}_{\phi} 
        = \frac{1}{\zeta K |\mathcal{D}|}
           \sum_{j=1}^K 
           \sum_{(q,o,r) \in \mathcal{D}_j}
           \left(\widehat{V}_j(q,o) - r \right)^2 .
        \end{equation*}
        
\end{itemize}

\section{Experiments}
\subsection{Dataset and Model}
In this paper, we adopt DAPO-17K \citep{yu2025dapo} for training and evaluate the performance of CDE on four challenging mathematical reasoning benchmarks: MATH \citep{hendrycks2021measuring}, AMC23 \citep{amc}, AIME24, and AIME25 \citep{aime}. These evaluations are designed to assess CDE’s effectiveness in comparison to standard PPO and GRPO algorithms. Due to computational resource constraints, we conduct training with a reduced setting. All experiments are implemented within the Verl framework using the Qwen3-4B-Base model \citep{yang2025qwen3}. For fair comparison, all the models use the default prompt in DAPO-17K as shown in Appendix \ref{sec:prompt} and the implementation details in Appendix \ref{sec:detail} to further elaborate on the training settings.

\subsection{Main Results}
The main results are presented in Table~\ref{tab:main} while the training dynamic is presented in Figure \ref{fig:dynamics}. Here PPL bonus denote adding Curiosity bonus on actors as in Equation \ref{eq:actor_eq}, $K$ Heads represents multi-head critic PPO with $K$ head critics. We report both average Pass@1 accuracy and Pass@16 results on evaluation datasets. The key observations are as follows:

\begin{table*}[h!]
    \centering
    \resizebox{\textwidth}{!}{%
    \begin{tabular}{l c cc cc cc c}
        \toprule
        \multirow{2}{*}{\bf Model} 
        & \multicolumn{1}{c}{\bf MATH} 
        & \multicolumn{2}{c}{\bf AMC23} 
        & \multicolumn{2}{c}{\bf AIME24} 
        & \multicolumn{2}{c}{\bf AIME25} 
        & \multicolumn{1}{c}{\bf Avg} 
        \\
        & Avg@1 & Avg@16 & Pass@16 & Avg@16 & Pass@16 & Avg@16 & Pass@16\\
        \midrule
        Qwen3-4B-Base & 23.1     & 10.9    & 53.8    & 1.5 & 8.4 & 1.3 & 8.3 & 9.2   \\
        \midrule
        \multicolumn{8}{c}{\textbf{\textit{GRPO based methods}}} \\
        Qwen3-4B-Base-GRPO   &  87.3 & 63.6 & 89.1 & 20.8 & 41.9 &  21.0 & 39.2 & 48.2  \\
        $\drsh$ \textbf{w/ PPL bonus} &  87.7 & 67.8 & 89.5 & 23.3 & 48.5 & 23.5 & 42.5 & 50.6 \\
        \midrule
        \multicolumn{8}{c}{\textbf{\textit{PPO based methods}}} \\
        Qwen3-4B-Base-PPO & 86.6 & 64.1 & 87.2 & 17.8 & 36.0 & 17.5 & 33.7 & 46.5 \\
        $\drsh$ \textbf{w/ PPL bonus} &  87.9 & 66.1 & 88.5 & 18.3 & 37.6 & 18.3 & 33.5 & 47.7 \\
        $\drsh$ \textbf{w/ 2 Heads} & 83.2 & 63.6 & 89.9 & 19.6 & 34.8 & 19.6 & 36.1 & 46.6 \\
        $\drsh$ \textbf{w/ 4 Heads} & 87.3 & 63.9 & 87.9 & 21.5 & 35.5 & 21.5 & 45.5 & 48.5 \\
        $\drsh$ \textbf{w/ 8 Heads} & 85.1 & 66.7 & 86.9 & 21.7 & 46.4 & 19.0 & 37.1 & 48.1 \\
        $\drsh$ \textbf{w/ 16 Heads} & 88.3 & 65.0 & 88.7 & 20.5 & 41.9 & 20.0 & 38.8 & 48.6 \\
        \bottomrule
    \end{tabular}
    }
    \caption{Zero-shot accuracy of different models on the validation datasets. Avg@16 denotes the mean Pass@1 accuracy over 16 sampled generations, while \textbf{Avg} column represents the overall average across datasets, computed as Avg@1 for MATH and Avg@16 for the remaining datasets.}
    \label{tab:main}
\end{table*}

\begin{figure}[h!]
    \centering
    \begin{subfigure}[t]{0.49\columnwidth}
        \centering
        \includegraphics[width=\linewidth]{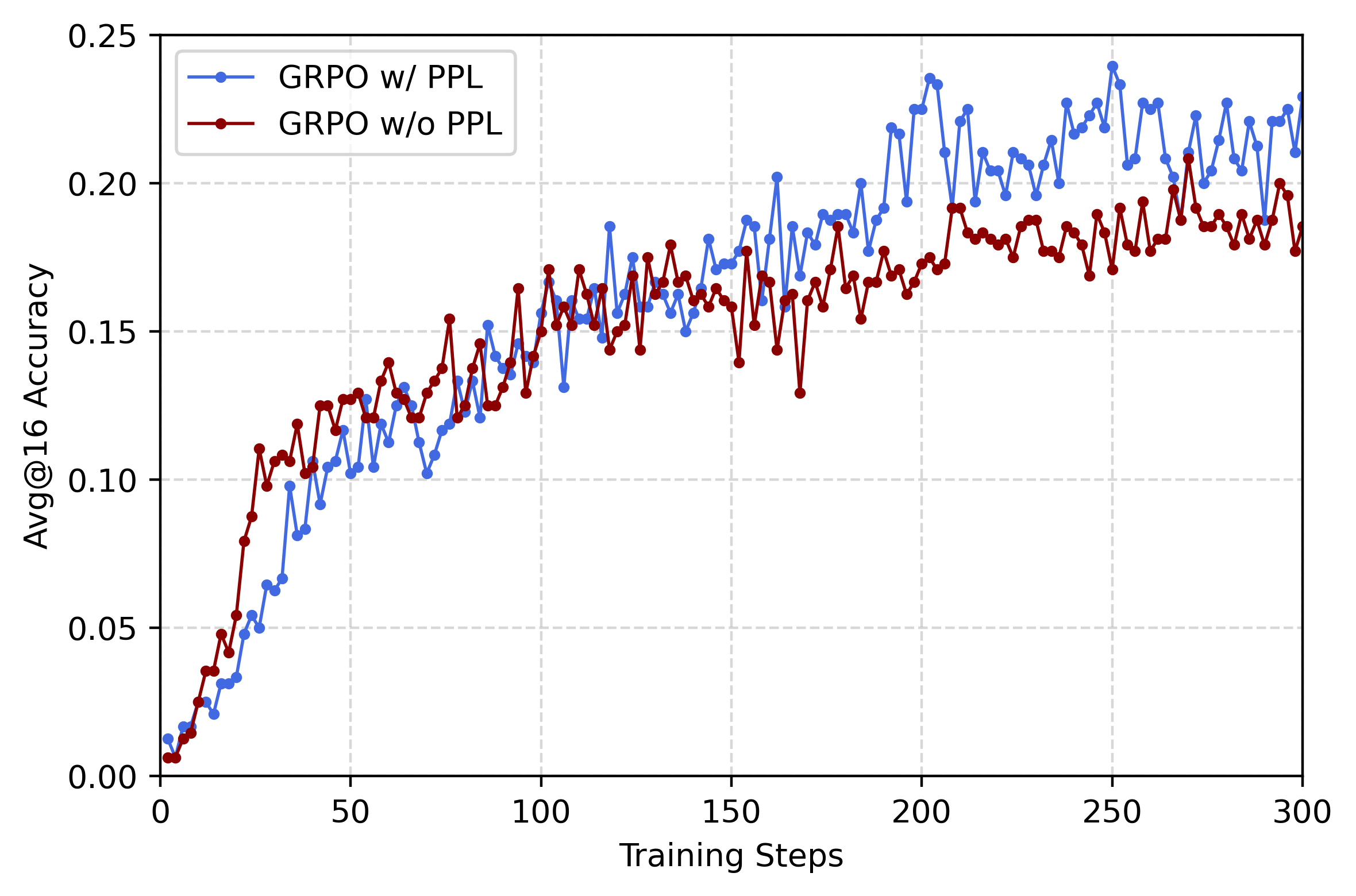}
    \end{subfigure}
    \hfill
    \begin{subfigure}[t]{0.49\columnwidth}
        \centering
        \includegraphics[width=\linewidth]{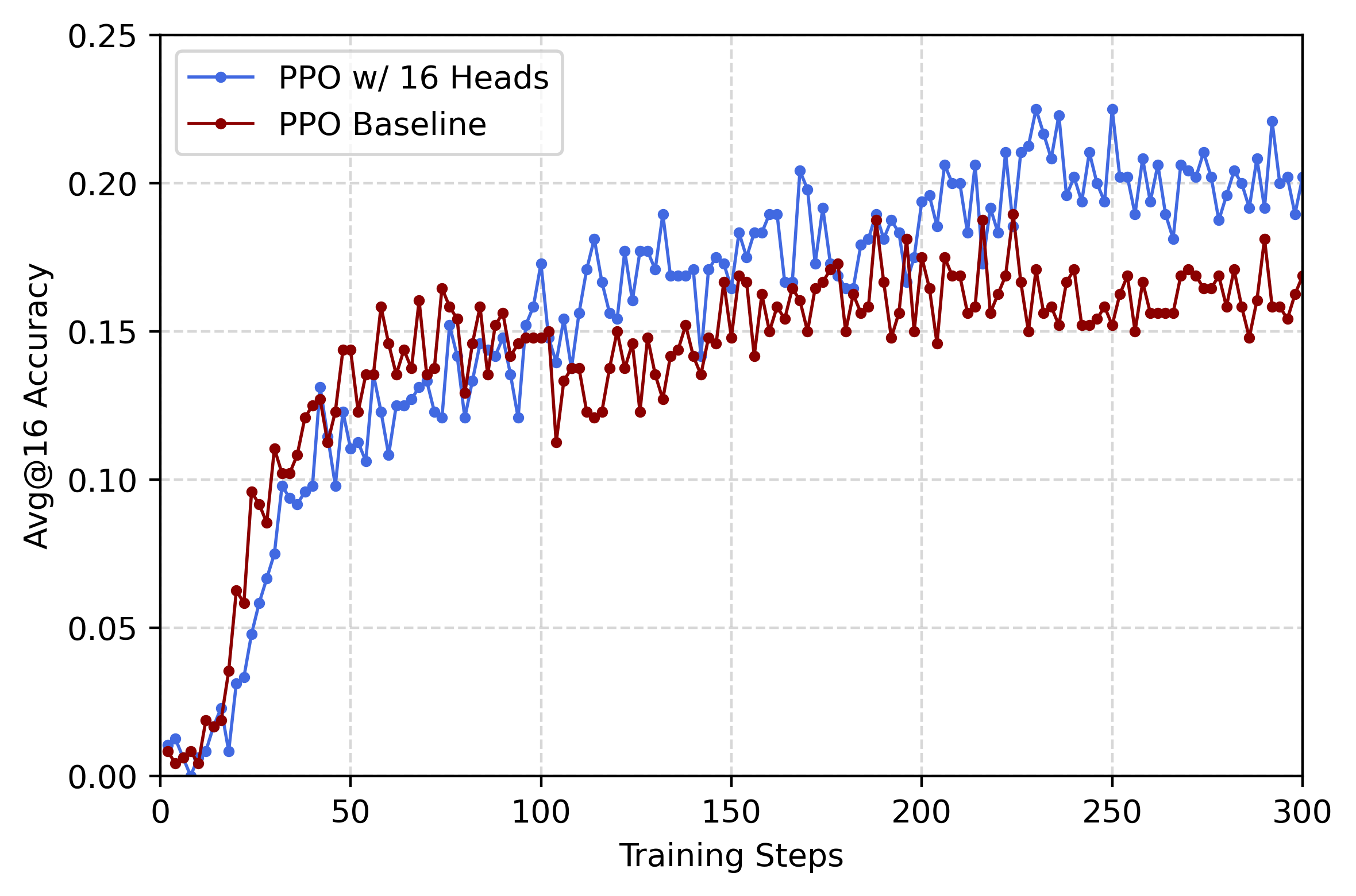}
    \end{subfigure}
    \hfill
    \caption{Comparison of Avg@16 accuracy on AIME25 over training of vanilla GRPO and PPO (Baseline methods) and GRPO with PPL bonus and 16 head multi-head PPO(Our methods).}
    \label{fig:dynamics}
\end{figure}

\begin{itemize}[leftmargin=*]
\item The PPL bonus further enhances the mathematical reasoning ability of the GRPO method, yielding an average improvement of approximately $+2.4$ points across datasets and demonstrating consistent superiority. In particular, our method achieves notable gains on Pass@16, surpassing the baseline GRPO by about $+8$ points on the AIME24 dataset.
\item Across benchmarks, multi-head PPO consistently outperforms vanilla PPO. Using $K=4$ and $K=16$ heads yields average gains of roughly $+2$ points, and we observe an around $+10$ points of increase in Pass@16 on AIME datasets in many cases. 
\item The performance of multi-head PPO generally increases with the number of heads $K$: with $K=2$ delivers negligible gains over the baseline, and performance increase begin to plateau once $K\ge4$, which suggests that a modest number of heads already captures most of the epistemic uncertainty needed.
\item As shown in Figure~\ref{fig:dynamics}, GRPO with PPL bonus and multi-head PPO increase test accuracy more slowly than baseline PPO/GRPO early in training, then catch up and ultimately surpass them. This pattern is consistent with enhanced exploration: the PPL bonus and head disagreement discourage premature exploitation of spurious high-reward trajectories. As state-action coverage expands, these signals calibrate, enabling a smoother shift to targeted exploitation and yielding higher final accuracy. 
\end{itemize}

\subsection{Understanding the Effect of the PPL Bonus}
In this subsection, we present additional experiments to investigate the role of the PPL bonus, from which we derive the following key findings.

\begin{wrapfigure}{r}{0.35\textwidth}
    \vspace{-20pt} 
    \centering
    \includegraphics[width=\linewidth]{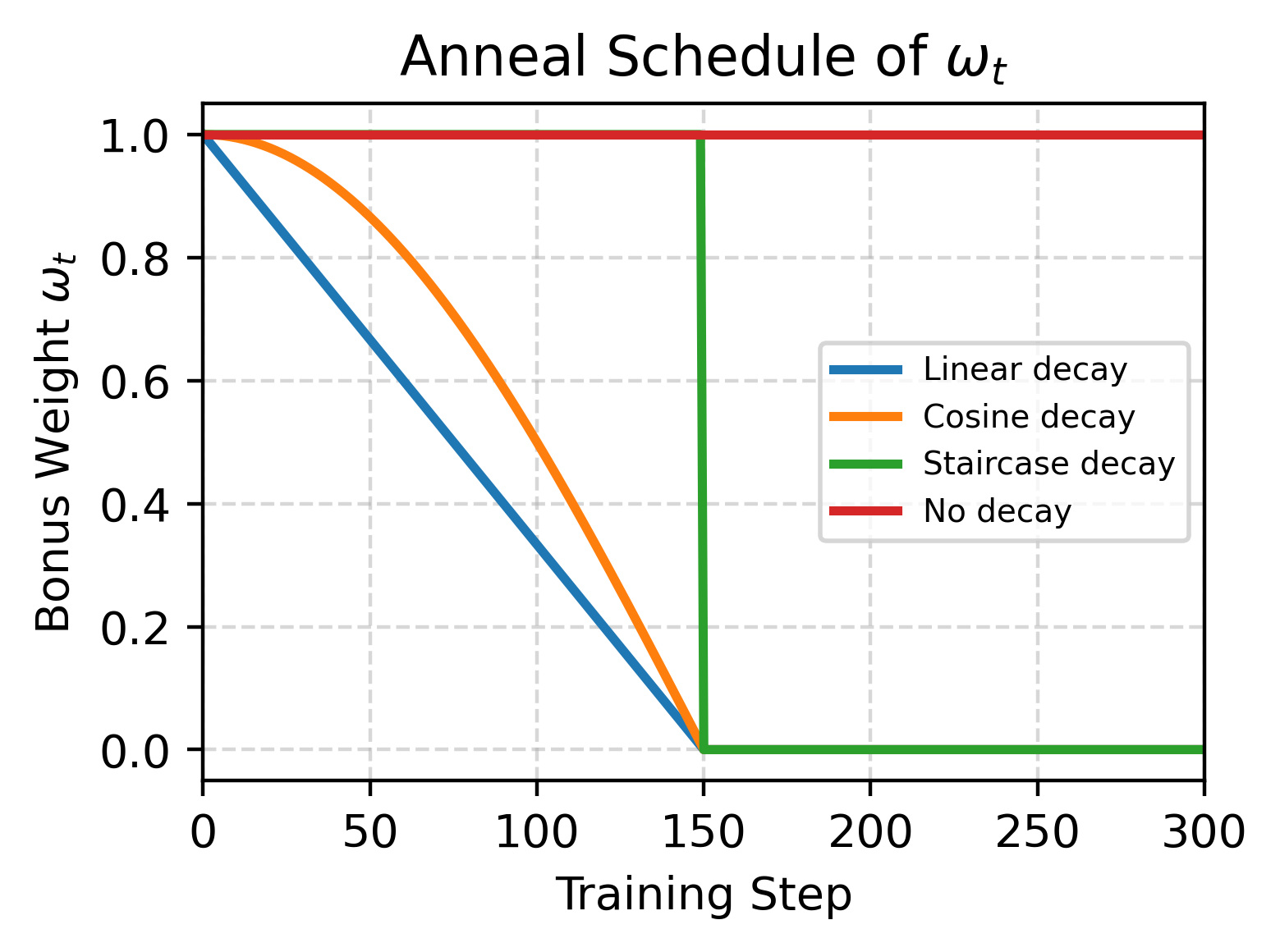}
    \vspace{-0.3in}
    \caption{An illustration of different weight anneal schedules.}
    \label{fig:anneal}
    \vspace{-15pt} 
\end{wrapfigure}

\textbf{Bonus weight decay is crucial}
We compare four schedules for the bonus weight $\omega_t$—\emph{No decay}, \emph{Linear}, \emph{Cosine}, and \emph{Staircase}—as illustrated in Figure~\ref{fig:anneal}, with the performance of models trained under each schedule summarized in Table~\ref{tab:anneal}. Briefly, the \emph{No decay} schedule maintains strong exploration throughout training, while the \emph{Staircase} schedule reduces $\omega_t$ abruptly, enabling strong exploration in the early phase and then removing the bonus for final convergence. The \emph{Linear} and \emph{Cosine} schedules provide intermediate behaviors.

The results in Table~\ref{tab:anneal} underscore two insights: First, decay of the bonus weight is necessary, as all decay schedules outperform the no-decay baseline by enabling a gradual shift from exploration to exploitation. Second, strong exploration in the early phase is crucial, with the staircase scheme proving most effective by sustaining high exploration initially to broaden state–action coverage and then removing the bonus abruptly to allow stable convergence, whereas the gentler cosine and linear decays weaken the signal too soon and thus yield smaller gains.

\begin{table*}[h!]
    \centering
    \resizebox{\textwidth}{!}{%
    \begin{tabular}{l c cc cc cc c}
        \toprule
        \multirow{2}{*}{\bf Model} 
        & \multicolumn{1}{c}{\bf MATH} 
        & \multicolumn{2}{c}{\bf AMC23} 
        & \multicolumn{2}{c}{\bf AIME24} 
        & \multicolumn{2}{c}{\bf AIME25} 
        & \multicolumn{1}{c}{\bf Avg} 
        \\
        & Avg@1 & Avg@16 & Pass@16 & Avg@16 & Pass@16 & Avg@16 & Pass@16\\
        \midrule
        \multicolumn{8}{c}{\textbf{\textit{Bonus Weight Decay Schedules}}} \\
    Qwen3-4B-Base-GRPO      & 87.3 & 63.6 & 91.1 & 21.0 & 41.9 & 20.8 & 39.2 & 48.2 \\
    $\drsh$ \textbf{$\omega_t$ No decay}               & 85.1 & 64.5 & 84.6 & 20.8 & 39.0 & 22.3 & 36.2 & 48.2 \\
    $\drsh$ \textbf{$\omega_t$ Linear decay}           & 85.4 & 66.1 & 91.9 & 23.3 & 40.4 & 20.0 & 40.4 & 48.7 \\
    $\drsh$ \textbf{$\omega_t$ Cosine decay}           & 86.7 & 68.1 & 90.0 & 22.5 & 44.9 & 21.5 & 40.7 & 49.7 \\
    $\drsh$ \textbf{$\omega_t$ Staircase decay}      & 87.7 & 67.8 & 89.2 & 23.5 & 48.5 & 23.3 & 40.3 & 50.6 \\
    \bottomrule
    \end{tabular}
    }
    \caption{Zero-shot accuracy of GRPO models under different PPL bonus weight decay schedules. The schedules follow those illustrated in Figure~\ref{fig:anneal}.}
    \label{tab:anneal}
\end{table*}

\begin{wrapfigure}{r}{0.5\textwidth}
    \vspace{-1pt} 
    \centering
    \includegraphics[width=0.9\linewidth]{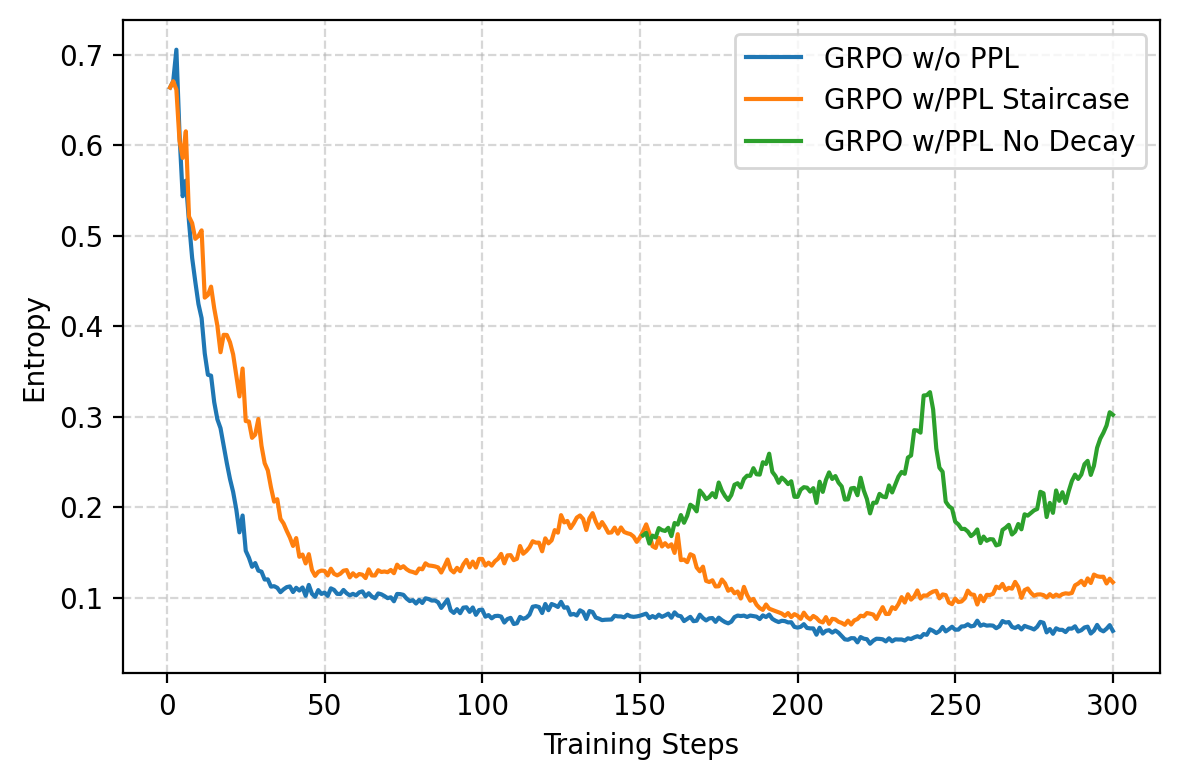}
    \caption{Dynamics of policy entropy over the training process. The bonus weight decay mechanism follows Figure~\ref{fig:anneal}. The Staircase and No Decay schedules share the same early training phase.}
    \label{fig:entropy}
\end{wrapfigure}

\textbf{Analysis of Entropy Dynamics}
As highlighted in prior work, entropy provides an important lens for understanding exploration ability \citep{cui2025entropy}, where a sharp decline in entropy often signals premature convergence and insufficient exploration. Figure~\ref{fig:entropy} illustrates the entropy dynamics of baseline GRPO compared to our proposed methods. First, relative to the baseline, the PPL bonus alleviates entropy collapse, demonstrating its role in promoting exploration. Second, when comparing decay schemes, PPL with No Decay shows persistent fluctuations and fails to converge, whereas Staircase decay yields more stable entropy trajectories. This observation is consistent with our earlier findings that decaying the bonus weight is essential for ensuring stable convergence while still supporting effective exploration.

\begin{figure}[h!]
    \centering
    \begin{subfigure}[t]{0.49\columnwidth}
        \centering
        \includegraphics[width=\linewidth]{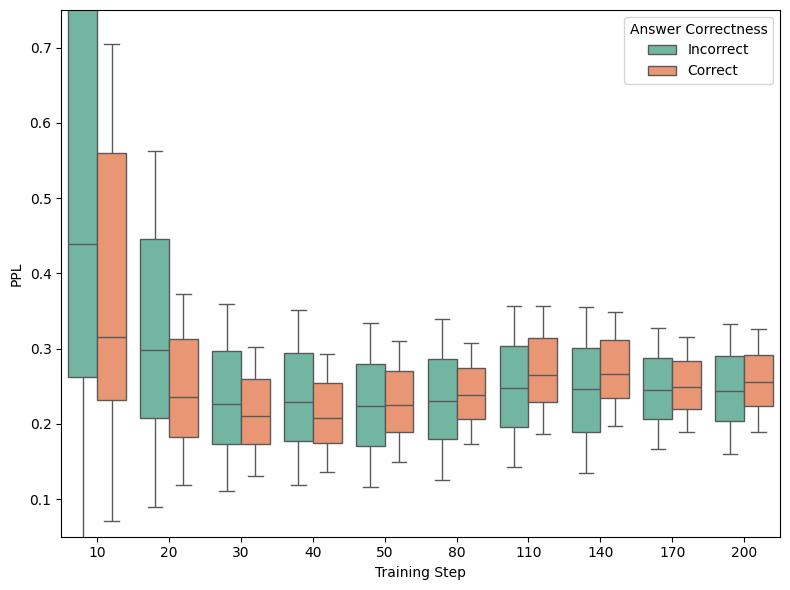}
        \caption{GRPO without PPL bonus}
    \end{subfigure}
    \hfill
    \begin{subfigure}[t]{0.49\columnwidth}
        \centering
        \includegraphics[width=\linewidth]{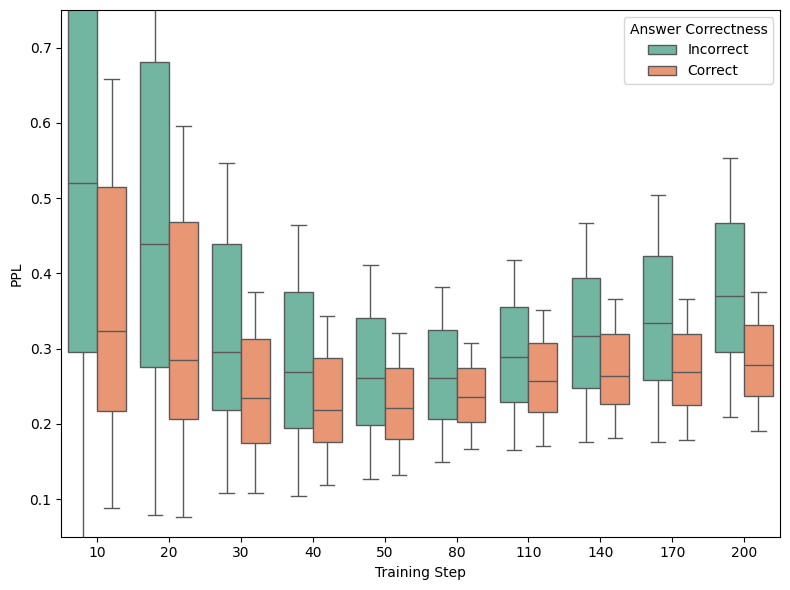}
        \caption{GRPO with PPL bonus}
    \end{subfigure}
    \hfill
    \caption{Average response PPL per training step, stratified by correctness.}
    \label{fig:PPLdynamics}
\end{figure}
\textbf{Analysis of Calibration}
As shown in Figure~\ref{fig:PPLdynamics}, we plot the batch-wise mean response perplexity (PPL), stratified by answer correctness. In subfigure~(a), we observe a phenomenon we term \textbf{calibration collapse}: early in naive GRPO training, correct responses have lower PPL (higher confidence) than incorrect ones, but as training progresses this gap shrinks and ultimately vanishes—confidence no longer tracks correctness. By contrast, with a PPL bonus (subfigure~(b)), this separation is sustained throughout training.

This pattern is explained by Theorem~\ref{thm:calibration}: while both naive GRPO and GRPO with a PPL bonus tend to increase confidence on correct answers, the PPL bonus additionally suppresses confident errors (low-PPL incorrect trajectories), thereby improving calibration.

This finding is original and practically important. Ideally, a trained model should be faithful—confident when its answer is correct and cautious when it is not. Better calibration enhances interpretability and supports inference-time selection strategies such as self-certainty BoN \citep{wang2022self} and DeepConf \citep{fu2025deep}. It also connects to the growing literature on calibrating LLMs, both during training (e.g., \citep{shen2024thermometer}) and at test time (e.g., \citep{ulmer2024calibrating}).

\subsection{Further Analysis of the Multi-Head Critic}
\label{sec:MHC_analysis}

\textbf{Analysis of Dynamics of $B_{\text{critic}}$}
We further examine the dynamics of the multi-head exploration bonus $B_{\text{critic}}$ by tracking its average value over the course of training. Specifically, for each training step, given the roll-outs $\mathcal{D}$ defined in Equation~\ref{eq:dataset}, we compute the average $B_{\text{critic}}$ across (prompt, partial response, reward) triplets within $\mathcal{D}$. As shown in sub-figure (a) of Figure~\ref{fig:MHA_dynamic}, this average decreases steadily as training progresses. The decline reflects that, with more training, similar trajectories are revisited more frequently, leading to reduced disagreement among critic heads. This phenomenon provides empirical support for interpreting the multi-head bonus as analogous to count-based exploration measures. 

\begin{figure}[h!]
    \centering
    \begin{subfigure}[c]{0.6\columnwidth}
        \centering
        \includegraphics[width=0.9\linewidth]{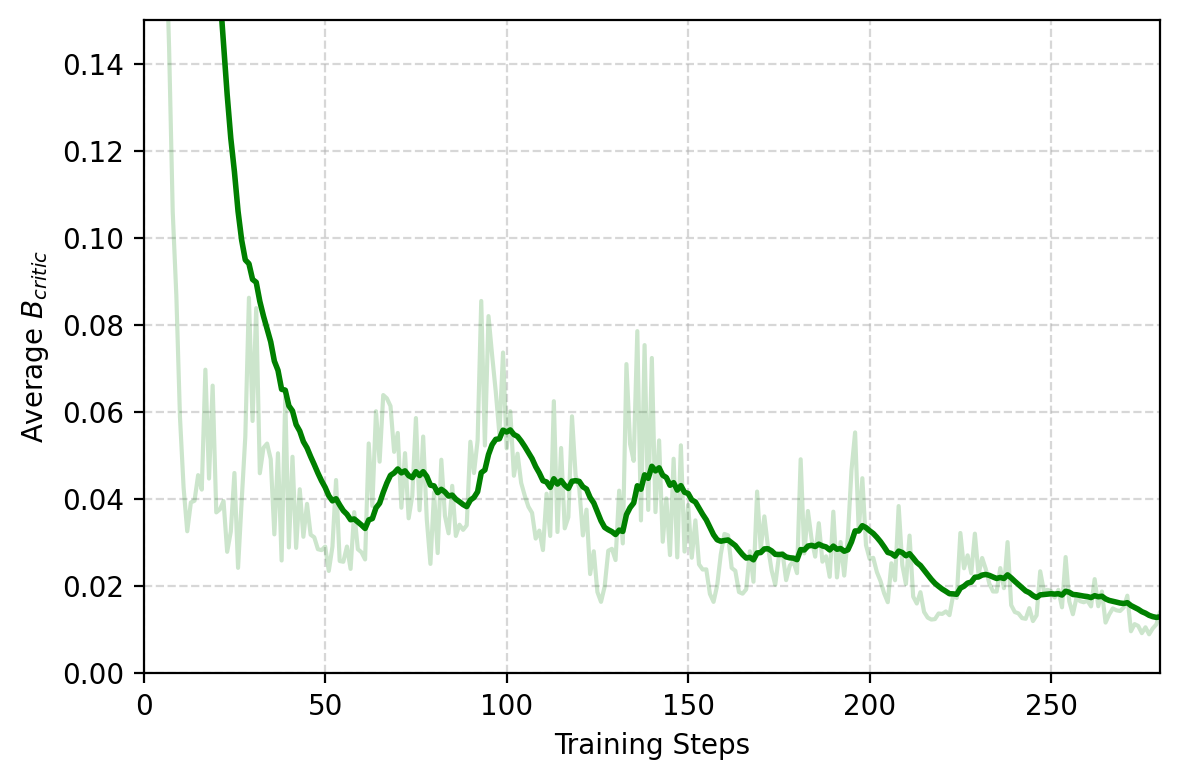}
        \caption{}
    \end{subfigure}
    \hfill
    \begin{subfigure}[c]{0.35\columnwidth}
        \centering
        \vspace{-1em}
        \includegraphics[width=0.9\linewidth]{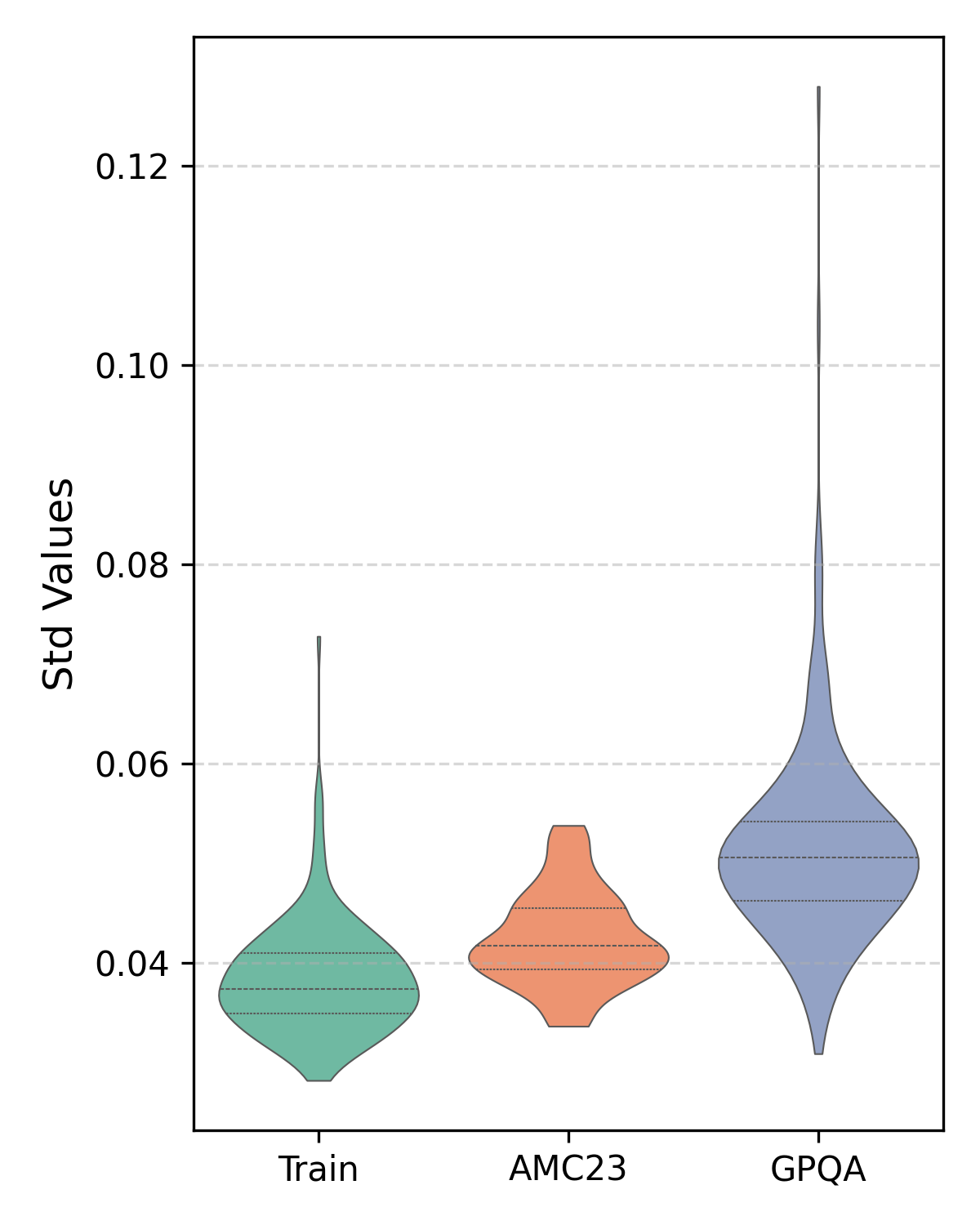}
        \caption{}
    \end{subfigure}
    \hfill
    \caption{(a) The average $B_{\text{critic}}$ over training steps. (b) Distribution of the standard deviation of value heads across prompts from different datasets.}
    \label{fig:MHA_dynamic}
\end{figure}

In sub-figure (b) of Figure \ref{fig:MHA_dynamic}, we present a cross-dataset analysis by calculating the average standard deviation of the value estimates across different questions. Specifically, we evaluate three datasets: the training set (DAPO-17K), the in-domain validation set (AMC23), and the out-of-domain validation set GPQA \citep{rein2023gpqa}. We observe that the training set exhibits a smaller standard deviation compared to both the in-domain and out-of-domain validation sets. This pattern aligns with the intuition that multi-head critics tend to show stronger disagreement on data that is less frequently encountered during training.

\textbf{Analysis of sub-sample fraction $\zeta$ during critic update}
Additionally, we examine the sensitivity of the critic update to the hyperparameter $\zeta$ (sub-sample fraction). We vary $\zeta$ under two configurations—critics with 16 heads and with 4 heads—and compare $\zeta\in\{0.5, 1\}$. As shown in Table~\ref{tab:mask_frac}, while a larger number of heads benefits from a larger sub-sample fraction, the overall performance is stable across settings. The model demonstrates robustness to the masking fraction $\zeta$, achieving similar results for both values tested (0.5 and 1.0).

\begin{table*}[h!]
    \centering
    \resizebox{0.9\textwidth}{!}{%
    \begin{tabular}{l c cc cc cc c}
        \toprule
        \multirow{2}{*}{\bf Model} 
        & \multicolumn{1}{c}{\bf MATH} 
        & \multicolumn{2}{c}{\bf AMC23} 
        & \multicolumn{2}{c}{\bf AIME24} 
        & \multicolumn{2}{c}{\bf AIME25} 
        & \multicolumn{1}{c}{\bf Avg} 
        \\
        & Avg@1 & Avg@16 & Pass@16 & Avg@16 & Pass@16 & Avg@16 & Pass@16\\
        \midrule
        \multicolumn{8}{c}{\textbf{\textit{Mask fraction}}} \\
        \textbf{16 Heads ; $\zeta=0.5$}   & 88.3 & 65.0 & 88.7 & 20.5 & 41.9 & 20.0 & 38.8 & 48.6 \\
        16 Heads ; $\zeta=1$   & 85.4 & 65.3 & 85.3 & 21.0 & 39.2 & 21.7 & 43.2 & 48.4 \\
        4 Heads ; $\zeta=0.5$    & 86.1 & 66.4 & 85.8 & 18.1 & 36.7 & 23.1 & 39.1 & 48.4 \\
        \textbf{4 Heads ; $\zeta=1$}    & 87.3 & 63.9 & 87.9 & 21.5 & 35.5 & 21.5 & 45.5 & 48.5 \\
        \bottomrule
    \end{tabular}
    }
    \caption{Ablation study on sub-sample fraction $\zeta$.}
    \label{tab:mask_frac}
\end{table*}

\section{Related Work}
\subsection{Reinforcement Learning (RL) for LLM reasoning}
Reinforcement Learning is a central technique for advancing the reasoning capabilities of LLMs. Initial approaches relied on reward models that provided either outcome-based supervision, focusing on the final answer \citep{cobbe2021training}, or process-based supervision, evaluating intermediate reasoning steps \citep{uesato2022solving}.  To navigate more complex problem spaces, these foundational reward strategies were often augmented with search algorithms such as MCTS \citep{feng2023alphazero, tian2024toward, chen2024alphamath, wang2024towards} and Q* \citep{wang2024q,wang2024litesearch}. More recently, RLVR \citep{lambert2024tulu} has emerged as a powerful alternative, demonstrating significant performance on complex reasoning tasks in mathematics and coding \citep{guo2025deepseek}. Consequently, a growing body of work seeks to apply RLVR to diverse domains, including multi-modal reasoning \citep{wang2025vl, li2025self}, logical reasoning \citep{zhou2025dissecting}, search engine use \citep{jin2025search, xiong2025rag}, and information extraction \citep{dai2025r1}.  Parallel efforts aim to improve upon the standard RLVR paradigm with techniques such as mixture-of-thought \citep{zheng2025learning}, self-evolving \citep{huang2025r}, parallel thinking~\citep{Zheng2025ParallelR1TP}. Despite these advances, persistent concerns remain regarding robustness \citep{dai2025breach, zhao2025one}, calibration \citep{shen2024thermometer}, and a lack of exploration evidenced by entropy collapse \citep{cui2025entropy, shen2025entropy}, highlighting the need for more principled training approaches.

\subsection{Efficient exploration}
Efficient exploration is a central challenge in Reinforcement Learning (RL), which aim to balance between exploration and exploitation \citep{SuttonRL, weng2020exploration, amin2021survey}. Many foundational approaches are heuristic-based, such as Gaussian noise \citep{lillicrap2015continuous} or the $\epsilon$-greedy method \citep{SuttonRL}. Entropy regularization is a more principled heuristic, which encourages the policy to be more stochastic. While simple to implement, these methods are often undirected—they promote pure randomness. Consequently, they can be suboptimal \citep{dann2022guarantees} with no significant gains in complex Deep RL \citep{andrychowicz2021matters} or LLM training \citep{cui2025entropy, shen2025entropy}.

In contrast, a major class of methods incentivizes exploration by adding exploration bonus to guide the agent toward novel or uncertain parts of the environment. Count-based approaches like UCB \citep{lai1987adaptive}, LinUCB \citep{li2010contextual}, and LSVI-UCB \citep{jin2020provably} use pseudo-counts of state-action visitations to encourage exploring rarely visited areas, achieving near-optimal theoretical guarantees in bandits and linear MDPs. Similarly, prediction-based methods such as ICM \citep{pathak2017curiosity} and RND \citep{burda2018exploration} use the error from a predictive model as a bonus, rewarding the agent for reaching states that are difficult to predict. Applying these guided exploration principles is a growing field in LLM. For instance, \citet{bai2025online} incorporate a count-based bonus into the RLHF process by introducing a coin flipping module. \citet{gao2025navigate} draws inspiration from RND by adding an auxiliary noise prediction network. However, both methods rely on expressive representations of long COT trajectories and introduce additional modules, which complicates the training framework. In contrast, \textbf{CDE} uses intrinsic curiosity signals from the actor and critics, requiring only minimal modifications to the framework and yielding efficient exploration both theoretically and empirically.

\section{Conclusion and Future Work}
We have presented Curiosity-Driven Exploration, an efficient technique that enhances agent learning by incorporating curiosity signals from both the actor and the critic. Our approach is notably lightweight, demanding only minor modifications to the original training architecture. Its effectiveness is demonstrated by consistent accuracy improvements over strong baselines on a suite of challenging mathematical reasoning benchmarks, with these empirical results strongly corroborating our underlying theoretical framework and intuition.

The \textbf{calibration collapse} revealed in our analysis aligns with recent findings on the root causes of LLM hallucination \citep{openai2025why}, pointing to a promising avenue for future work. We hypothesize that the underlying source of this collapse is the reward design of RLVR training. Specifically, RLVR with outcome reward prioritizes correct final outcomes at the expense of rigorous intermediate reasoning. Our experiments shed light on this direction by demonstrating that an alternative multi-perspective reward design (e.g., the PPL bonus) can be valuable for guiding the RLVR process more effectively.

\bibliography{colm2024_conference}
\bibliographystyle{colm2024_conference}

\newpage
\appendix
\section{Training Details}
\label{sec:detail}
We use \texttt{verl} as the training framework\footnote{\url{https://github.com/volcengine/verl}}.
Configurations for training CDE and baseline models are listed in Table~\ref{tab:training-detail}.

\begin{table}[h!]
    \centering
    \begin{subtable}[t]{0.49\linewidth}
        \centering
        \resizebox{\linewidth}{!}{
        \begin{tabular}{lcc}
            \toprule
            Config & \textbf{GRPO} & \textbf{PPO}  \\
            \midrule
            actor-lr               & 1e-6 & 1e-6 \\
            critic-lr              & - & 1e-5 \\
            critic-warmup          & - & 10 \\
            kl\_coef               & 0.0  & 0.0  \\
            max\_prompt\_length    & 2K   & 2K   \\
            max\_response\_length  & 3K   & 3K   \\
            train\_batch\_size     & 256  & 512  \\
            ppo\_mini\_batch\_size & 256  & 256  \\
            clip\_ratio            & 0.20 & 0.20 \\
            sample temperature     & 1.0  & 1.0  \\
            rollout.n              & 8    & 4    \\
            total\_training\_steps & 300  & 300  \\
            \bottomrule
        \end{tabular}}
        \caption{}
    \end{subtable}\hfill
    \begin{subtable}[t]{0.49\linewidth}
        \centering
        \resizebox{\linewidth}{!}{
        \begin{tabular}{lccc}
            \toprule
            Config & \textbf{PPL} & \textbf{2,4 Heads} & \textbf{8,16 Heads} \\
            \midrule
            $\kappa$               & 3 & 3 & 3 \\
            $\alpha$              & 1 & 0.5 & 0.5 \\
            $\omega_t$          & Staircase & No decay & No decay \\
            $\zeta$               & -  & 1 & 0.5  \\
            \bottomrule
        \end{tabular}}
        \caption{}
    \end{subtable}
    \caption{(a) Baseline training configurations. The \textbf{GRPO} setup is shared across all GRPO-based methods (e.g., ``Qwen3-4B-Base-GRPO'' and ``w/PPL bonus'' in Table \ref{tab:main}); likewise, the \textbf{PPO} setup is shared across all PPO-based methods. (b) \textbf{CDE}-specific configurations. The \textbf{PPL} settings are identical for both the GRPO ``w/PPL bonus'' and PPO ``w/PPL bonus'' variants.}
    \label{tab:training-detail}
\end{table}

\section{Prompt}
\label{sec:prompt}

\begin{tcolorbox}[
    colframe=pink!80!black,
    colback=pink!10!white,
    fontupper=\small,
    boxrule=0.4mm,
    left=2mm, right=2mm, top=0.5mm, bottom=0.5mm,
    arc=2mm,
    enhanced
]
Solve the following math problem step by step. The last line of your response should be of the form Answer: \$Answer (without quotes) where \$Answer is the answer to the problem.

\{Problem\}

Remember to put your answer on its own line after ``Answer:''.
\end{tcolorbox}

\noindent\begin{minipage}{\linewidth}
\captionof{figure}{The prompt for RLVR training.}
\end{minipage}

\section{Details on Hash-based pseudo count}
\label{sec:hash}
The core idea is to map a full prompt–response trajectory to a compact hash that serves as a pseudo-state for exploration. Given a prompt–response pair $(q,o)$ with tokenized sequences $q=\{q_1,\ldots,q_D\}$ and $o=\{o_1,\ldots,o_T\}$, let the model produce last-layer hidden states $\mathbb{h}=\{h_1,\ldots,h_{D+T}\}$, $h_i\in\mathbb{R}^{d}$. We form a trajectory embedding $h_{q,o}\in\mathbb{R}^d$ from $\mathbb{h}$ via one of: (i) $h_{D+T}$; (ii) $h_{D+T-1}$; or (iii) mean pooling $\frac{1}{D+T}\sum_{i=1}^{D+T}h_i$. With a random projection matrix $A\in\mathbb{R}^{k\times d}$ (rows drawn i.i.d.\ from $\mathcal{N}(0,I)$ or Rademacher), we compute a $k$-bit SimHash code
\begin{equation*}
    \phi(q,o)=\operatorname{sign}\!\big(Ah_{q,o}\big)\in\{-1,+1\}^k,
\end{equation*}
and map it to a bucket index $b=\mathrm{bucket}(\phi)\in\{0,\ldots,2^k-1\}$. Let $n(b)$ be the visitation count of bucket $b$. We apply intrinsic reward shaping to encourage rarely visited trajectories:
\begin{equation*}
    \tilde r(q,o)=r(q,o)+\omega_t\frac{\beta_t}{\sqrt{n(b)}},
\end{equation*}
where $\beta_t$ is the weight for exploration bonus a. This yields an efficient, matrix-inversion–free intrinsic bonus that scales linearly in $k d$ per sample, following \citet{tang2017exploration}.


\begin{algorithm}[t]
\caption{Count-based exploration for RLVR through SimHash}
\label{alg:simple-simhash}
\DontPrintSemicolon
\SetKwInOut{Input}{Inputs}
\Input{Policy $\pi_\theta$; aggregator $g$; random projection matrix $A\in\mathbb{R}^{k\times d}$; hash counts $n[\cdot]\leftarrow 0$; weights $\beta_t$.}
\For{each training iteration $t=1,2,\dots$}{
  Sample prompts $q$ and generate responses $o\sim \pi_\theta(\cdot\mid q)$\;
  \For{each $(q,o)$ in the batch}{
    Obtain last-layer token states $\{h_i\}_{i=1}^{|q|+|o|}$; set $h \leftarrow g(\{h_i\})$\;
    $c \leftarrow \mathrm{sign}(A h)$,\quad $b \leftarrow \mathrm{bucket}(c)$\;
    $n[b] \leftarrow n[b] + 1$\;
    $\tilde r(q,o) \leftarrow r(q,o) + \beta_t/\sqrt{n[b]}$\;
  }
  Update $\pi_\theta$.
}
\end{algorithm}

\section{Proof for Calibration Theorem}
\label{sec:calibration}
Define $\Tilde{r}_t(q,o) = r(q,o) + b_t(q,o)$ where $b_t(q,o)  = \omega\min\{\kappa |r(q,o)| , -\frac{\alpha}{T_o} \log \pi_t(o|q)\}$  is a bonus function where $T_o$ is the length of response $o$.  Note that $\omega$ is a redundant variable in theory because we can write $b_t(q,o)  = \min\{\kappa' |r(q,o)| , -\frac{\alpha}{T_o} \log \pi_t(o|q)\}$ with $\kappa' = \omega \kappa$ and $\alpha' = \omega \alpha$. Given that $r(x,y) \in \{1,-1\}$, it suffices to consider $b_t(q,o) = \min\{\kappa , - \frac{\alpha}{T_o} \log \pi_t(o|q)\}$. Thus, as long as we use $\kappa < 1$, we have $\text{sign}(\Tilde{r}_t(q,o)) =  \text{sign}(r(q,o))$. The introduce of bonus does not change the sign of the original correctness reward.
 
Consider single step policy optimization
\begin{align*}
    \pi_{t+1}(\cdot|q) = \arg\max_{\pi}\left\{ \sum_{o} \pi(o|q)\tilde{r}_t(q,o) - \frac{1}{\eta}\text{KL}\left(\pi(\cdot|q) \| \pi_t(\cdot|q)\right)\right\},
\end{align*}
which has closed-form solution 
\begin{align*}
     \pi_{t+1}(o|q) =  \frac{\pi_t(o|q) \exp\left(\eta \Tilde{r}_t(q,o)\right)}{\sum_{o'}\pi_t(o'|q) \exp\left(\eta\Tilde{r}_t(q,o')\right) }.
\end{align*}
For any question $q$ and response $o$. Define $Z(q) = \sum_{o'}\pi_t(o'|q) \exp\left(\eta\Tilde{r}_t(q,o')\right)$, we have
\begin{align*}
    \log\pi_{t+1}(o|q) = \log\pi_t(o|q) +  \eta \Tilde{r}_t(q,o) -\log\left(Z(q)\right).
\end{align*}
Define $\Delta_t(o|q) = \log\pi_{t+1}(o|q) - \log\pi_{t}(o|q)$ as the change of likelihood of response $o$ under question $q$ at update step $t$. For two correct response $o^+_1$ and $o^+_2$ with length $T_{o^+_1}$ and $T_{o^+_2}$, and  $-\frac{\alpha}{T_{o^+_1}} \log \pi_t(o^+_1|q) \ge  - \frac{\alpha}{T_{o^+_2}} \log \pi_t(o^+_2|q)$ (i.e. $o_1^+$ has larger perplexity), we have
\begin{align*}
&\Delta_t(o^+_1|q) - \Delta_t(o^+_2|q)
\\&= \Tilde{r}_t(q,o^+_1) -   \Tilde{r}_t(q,o^+_2) 
\\&=  b_t(q,o^+_1) -  b_t(q,o^+_2)
\\&= \min\{\kappa ,  -\frac{\alpha}{T_{o^+_1}} \log \pi_t(o^+_1|q)\} - \min\{\kappa , - \frac{\alpha}{T_{o^+_2}} \log \pi_t(o^+_2|q)\}
\\&\ge 0
\end{align*}

Similarly, for  two incorrect response $o^-_1$ and $o^-_2$ with $-\frac{\alpha}{T_{o^-_1}} \log \pi_t(o^-_1|q) \ge  - \frac{\alpha}{T_{o^-_2}} \log \pi_t(o^-_2|q)$ (i.e. $o_1^-$ has larger perplexity), we have $\Delta_t(o^-_1|q) - \Delta_t(o^-_2|q) \ge 0$.

Specifically, given a question $q$, for any response $(o_1, o_2)$ that has the same correctness label and $-\frac{\alpha}{T_{o_1}} \log \pi_t(o_1|q) \ge  - \frac{\alpha}{T_{o_2}} \log \pi_t(o_2|q)$,  we have
\begin{itemize}
    \item  If $\Tilde{r}_t(q,o_1) \ge \frac{1}{\eta}\log\left(Z(q)\right)$ and $\Tilde{r}_t(q,o_2) \ge \frac{1}{\eta}\log\left(Z(q)\right)$ , then $\Delta_t(o_1|q) \ge 0$ and $\Delta_t(o_2|q) \ge  0$ but $o_1$ has more likelihood increase.
    \item  If $\Tilde{r}_t(q,o_1) \ge \frac{1}{\eta}\log\left(Z(q)\right)$ and $\Tilde{r}_t(q,o_2) < \frac{1}{\eta}\log\left(Z(q)\right)$ , then $\Delta_t(o_1|q) \ge 0$ and $\Delta_t(o_2|q) < 0$ where $o_1$'s likelihood increase but $o_2$'s likelihood decrease.
    \item  If $\Tilde{r}_t(q,o_1) < \frac{1}{\eta}\log\left(Z(q)\right)$ and $\Tilde{r}_t(q,o_2) < \frac{1}{\eta}\log\left(Z(q)\right)$ , then $\Delta_t(o_1|q) < 0$ and $\Delta_t(o_1|q) < 0$ but $o_1$ has less likelihood decrease.
    \item  It is impossible that $\Tilde{r}_t(q,o_1) < \frac{1}{\eta}\log\left(Z(q)\right)$ and $\Tilde{r}_t(q,o_2) \ge \frac{1}{\eta}\log\left(Z(q)\right)$ given that  $(o_1, o_2)$ has the same correctness label and $-\frac{\alpha}{T_{o_1}} \log \pi_t(o_1|q) \ge  - \frac{\alpha}{T_{o_2}} \log \pi_t(o_2|q)$.
\end{itemize}

\section{Proof for Consistency of Multi-head Critic Bonus}
\label{sec:LinMDP}

\textbf{Linear MDP and Assumptions}

\begin{assumption}[Linear MDP]
We consider finite horizon $\mathcal{M} = (\mathcal{S}, \mathcal{A}, R, P, H)$ with horizon $H$, state space $\mathcal{S}$, action space $\mathcal{A}$, reward function $R: \mathcal{S} \times \mathcal{A} \rightarrow \mathbb{R}$, and transition $P: \mathcal{S} \times \mathcal{A} \rightarrow \mathcal{S}$ such that there exists a known feature $\phi \in \mathbb{R}^d$ and unknown features $\theta, \psi \in \mathbb{R}^d$ to ensure
\begin{align*}
    R(s,a) = \phi(s,a)^\top \theta \quad \quad P(s'|s,a) = \phi(s,a)^\top \psi(s').
\end{align*}
Without loss of generality, we assume $\|\phi(s,a)\| \le 1$ for all $(s,a)$, and $\|\psi(s')\| \le \sqrt{d}, \|\theta\|_2 \le \sqrt{d}$.
\label{ass:linearMDP}
\end{assumption}

\begin{lemma}[Proposition~2.3 in \citet{jin2020provably}]
    For linear MDPs that satisfy Assumption \ref{ass:linearMDP},  there exists $w_h^\star\in\R^d$ such that 
    \begin{equation*}
    Q_h^\pi(s,a) := \E\Big[\sum_{t=h}^H  r_t \big| s_h=s,a_h=a \Big] = \phi(s,a)^\top w_h^\star.
    \end{equation*}
    \label{ass:linear-Q}
\end{lemma}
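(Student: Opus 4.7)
The plan is to prove the claim by \emph{backward induction} on the horizon index $h$, exploiting the Bellman recursion together with the linear decomposition of the reward and transition kernel provided by Assumption~\ref{ass:linearMDP}. This is a textbook consequence of the linear-MDP structure, so no genuine obstacle arises; the argument reduces to algebraic bookkeeping, and I would include it mainly for self-containedness since Lemma~\ref{ass:linear-Q} is invoked in the proof of Theorem~\ref{thm:consistency}.

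For the base case $h = H$, the $Q$-function collapses to the immediate reward, giving $Q_H^\pi(s,a) = R(s,a) = \phi(s,a)^\top \theta$ by the linear-reward assumption, so the choice $w_H^\star := \theta$ works. For the inductive step, I would assume $Q_{h+1}^\pi(s,a) = \phi(s,a)^\top w_{h+1}^\star$ for some $w_{h+1}^\star \in \mathbb{R}^d$ and define the next-step value $V_{h+1}^\pi(s') := \mathbb{E}_{a' \sim \pi(\cdot\mid s')}\!\left[Q_{h+1}^\pi(s',a')\right]$, which is a scalar function of $s'$ alone. Applying the one-step Bellman equation together with the linear reward and transition decompositions yields
\begin{align*}
Q_h^\pi(s,a) &= R(s,a) + \mathbb{E}_{s' \sim P(\cdot\mid s,a)}\!\left[V_{h+1}^\pi(s')\right] \\
&= \phi(s,a)^\top \theta + \int \phi(s,a)^\top \psi(s')\, V_{h+1}^\pi(s')\, ds' \\
&= \phi(s,a)^\top \left(\theta + \int \psi(s')\, V_{h+1}^\pi(s')\, ds'\right),
\end{align*}
where the second equality uses the linearity of $R$ and $P$, and the third pulls the $(s,a)$-independent row vector $\phi(s,a)^\top$ out of the integral over $s'$. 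Setting $w_h^\star := \theta + \int \psi(s')\, V_{h+1}^\pi(s')\, ds'$ closes the induction.

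The two conceptual points worth emphasizing are: (i) $V_{h+1}^\pi(s')$ depends only on the next state $s'$, so the integrated quantity is a fixed finite-dimensional vector independent of $(s,a)$, rather than a function of the current state--action pair; and (ii) the linearity of $P(s'\mid s,a)$ in the feature $\phi(s,a)$ is precisely what allows the expectation over next states to factor through the inner product with $\phi(s,a)$. If a norm bound on $w_h^\star$ were needed for downstream use---not required for the statement as written, but useful for applications such as Theorem~\ref{thm:consistency}---the triangle inequality combined with $\|V_{h+1}^\pi\|_\infty \le H - h$ and the assumed $\|\psi(\cdot)\| \le \sqrt{d}$ would yield $\|w_h^\star\| \le H\sqrt{d}$, which I would record at the end of the induction for completeness.
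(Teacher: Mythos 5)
Your proof is correct. Note, however, that the paper does not prove this lemma at all: it is imported verbatim as Proposition~2.3 of \citet{jin2020provably}, and your backward-induction argument is essentially the standard proof given in that reference (there the induction is phrased as a single Bellman step, since only the well-definedness of $V_{h+1}^\pi$ as a bounded function of $s'$ is needed, not the linearity of $Q_{h+1}^\pi$ --- but this is a purely organizational difference). The only caveat worth flagging is in your optional closing remark: the bound $\|w_h^\star\|\le H\sqrt{d}$ requires controlling $\|\int \psi(s')V_{h+1}^\pi(s')\,ds'\|$, which follows from the total-variation bound $\|\mu_h(\mathcal{S})\|\le\sqrt{d}$ assumed in \citet{jin2020provably} but not immediately from the pointwise bound $\|\psi(s')\|\le\sqrt{d}$ as written in Assumption~\ref{ass:linearMDP}; since you explicitly mark that bound as not required for the statement, this does not affect the validity of your proof.
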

The linearity of $Q$-functions enables using regression technique to solve it. Consider a dataset with $n$ observations $\mathcal{D} = \{s_{i,h}, a_{i,h}, G_{i,h}\}_{i=1}^n$ where $G_{i,h}$ is the Monte-Carlo return. Let $\phi_{i,h} = \phi(s_{i,h}, a_{i,h})$ and denote the regression noise as $\varepsilon_{i,h} = G_{i,h} - \phi_{i,h}^\top w_h^\star$. We impose the following assumptions.

\begin{itemize}
\item[(A1)] $\E[\varepsilon_{i,h}\mid \phi_{i,h}] = 0$ and $\{(\varepsilon_{i,h})\}_{i=1}^n$ are i.i.d. $\sigma^2$–sub-Gaussian for each fixed $h$;
\item[(A2)] $\frac{1}{n}\sum_{i=1}^n \phi_{i,h}\phi_{i,h}^\top \xrightarrow{\mathbb{P}} \Sigma_t \succ 0$
\end{itemize}

\cite{jin2020provably} shows that doing value iteration on optimistically estimated Q function can achieve near-optimal regret for linear MDP, where the optimistic Q function is the combination of linear regression estimation and exploration bonus $b_{n,h} = \beta \sqrt{\phi_{n,h}^\top \Lambda^{-1}_{n,h} \phi_{n,h}}$, where $\Lambda_{n,h} = \lambda I + \sum_{i=1}^n \phi_{i,h}\phi_{i,h}^\top$ and $\beta$ is some constant. Below we will formally connect our bootstrapped bonus with this term.

\textbf{Formulation of the bootstrap multi-head critic}

We accommodate the bootstrap multi-head into the linear-MDP setting. For any time step $h$, we sample $K$ mini-batches $\{S_k \subset[n]\}_{k=1}^K$ of size $m=\zeta n$ uniformly without replacement from $\mathcal{D}$ and construct the ridge estimator as follows
\begin{equation*}
    \widehat w_{n,h}^{(k)} = \arg\min_{w}\sum_{r\in S_k}(G_{r,h}-\phi_{r, h}^\top w)^2+ \zeta \lambda\|w\|^2.
\end{equation*}
For any feature $\phi\in\R^d$, we define the bootstrap multi-head bonus as
\begin{equation*}
b^{\text{boot}}_{h, K}(\phi) = \text{std}\left( \big\{ \phi^\top \widehat w_{n,h}^{(k)} \big| 1 \leq k \leq K \big\} \right).
\end{equation*}

\textbf{Elliptical (``count-based'') bonus in \citep{jin2020provably}.}
The ridge estimator is constructed using all data across $n$ trajectories as follows
\begin{equation*}
    \widehat w_{n,h} = \arg\min_{w}\sum_{i=1}^n(G_{i,h}-\phi_{i, h}^\top w)^2+ \zeta \lambda\|w\|^2.
\end{equation*}
For any query feature $\phi\in\R^d$, the bonus term is $b^{\text{cnt}}_{h}(\phi) = \sqrt{\phi^\top \Lambda_{n,h}^{-1}\phi}$.

\paragraph{Formal Version and Proof of Theorem \ref{thm:consistency}}
\begin{theorem}
Under Assumption \ref{ass:linearMDP} and assumptions (A1)–(A2), for any fixed time-step $h$ and query $\phi\in\R^d$,
\begin{equation*}
b^{\text{boot}}_{h,K}(\phi)
\xrightarrow[\ K\to\infty,\ n\to\infty\ ]{\ \mathbb{P}\ }
\beta \sqrt{\phi^\top \Lambda_{n,h}^{-1}\phi},
\end{equation*}
where $\beta$ is some constant.
\end{theorem}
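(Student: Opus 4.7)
The plan is to split the two sources of randomness governing the heads: the noise in the returns $\{G_{i,h}\}$ is frozen once $\mathcal{D}$ is fixed, while the subsampling $\{S_k\}_{k=1}^K$ is the only source of head-to-head variability. With this decoupling, I would (i) compute, conditionally on $\mathcal{D}$, the population standard deviation of $\phi^\top \widehat w_{n,h}^{(k)}$ across the bootstrap mechanism and show it equals $\beta\sqrt{\phi^\top\Lambda_{n,h}^{-1}\phi}\cdot(1+o_{\mathbb{P}}(1))$, and then (ii) show that the empirical std across the $K$ heads converges to this conditional std as $K\to\infty$, and close the argument with a diagonal combination of the two limits.

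For (i), I would start from the ridge closed form $\widehat w_{n,h}^{(k)} = (\Phi_{S_k}^\top \Phi_{S_k} + \zeta\lambda I)^{-1}\Phi_{S_k}^\top G_{S_k}$, substitute $G_{i,h}=\phi_{i,h}^\top w_h^\star+\varepsilon_{i,h}$, and invoke a matrix-Bernstein bound for uniform subsampling without replacement (valid because $\|\phi_{i,h}\|\le 1$ by Assumption~\ref{ass:linearMDP} and $n^{-1}\Lambda_{n,h}\to\Sigma_h\succ 0$ by (A2)) to obtain $(\Phi_{S_k}^\top \Phi_{S_k}+\zeta\lambda I)^{-1} = \zeta^{-1}\Lambda_{n,h}^{-1}(I+o_{\mathbb{P}}(1))$ uniformly in $k$. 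Setting $b_i := (\Lambda_{n,h}^{-1}\phi)^\top\phi_{i,h}\varepsilon_{i,h}$, this produces the linearization
\begin{equation*}
\phi^\top\widehat w_{n,h}^{(k)} \;=\; \phi^\top w_h^\star \;+\; \zeta^{-1}\sum_{i\in S_k}b_i \;+\; o_{\mathbb{P}}\bigl(\sqrt{\phi^\top\Lambda_{n,h}^{-1}\phi}\bigr),
\end{equation*}
where the $O(\lambda)$ ridge-shrinkage term is absorbed into the remainder. Since $S_k$ is a uniform size-$\zeta n$ subset of $[n]$, the finite-population variance formula yields $\mathrm{Var}_{S_k}\bigl[\zeta^{-1}\sum_{i\in S_k}b_i\,\big|\,\mathcal{D}\bigr] = \tfrac{1-\zeta}{\zeta}\,n\, s_b^2$ with $s_b^2=(n-1)^{-1}\sum(b_i-\bar b)^2$; a Bernstein concentration for sub-Gaussian noise under (A1) combined with $n^{-1}\Lambda_{n,h}\to\Sigma_h$ delivers $n s_b^2 = \sigma^2\phi^\top\Lambda_{n,h}^{-1}\phi\,(1+o_{\mathbb{P}}(1))$, so the conditional variance equals $\tfrac{\sigma^2(1-\zeta)}{\zeta}\phi^\top\Lambda_{n,h}^{-1}\phi$ up to lower order, identifying $\beta=\sigma\sqrt{(1-\zeta)/\zeta}$.

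For (ii), conditional on $\mathcal{D}$ the quantities $\{\phi^\top\widehat w_{n,h}^{(k)}\}_{k=1}^K$ are exchangeable draws from the bootstrap distribution with a second moment bounded uniformly in $n$ by the computation above, so a standard LLN applied to their sample mean and sample second moment, together with the continuous mapping theorem for $x\mapsto\sqrt{x}$, gives that the empirical std converges in conditional probability to the conditional population std. A diagonal argument then merges the $K\to\infty$ and $n\to\infty$ limits into the joint-probability convergence in the theorem. The main obstacle I anticipate is the subsampled matrix-concentration step: because the regularizer inside each head is $\zeta\lambda I$ rather than $\lambda I$, the relative spectral error in $\Phi_{S_k}^\top\Phi_{S_k}$ has to be $o_{\mathbb{P}}(1)$ on the eigenspaces of $\Lambda_{n,h}$ that carry most of the mass of $\phi$, uniformly over the $K$ heads; ill-conditioned directions in which $\lambda$ dominates would require either separate ridge-bias bookkeeping or a mild lower bound on $\lambda_{\min}(\Sigma_h)$, a technicality I would dispatch by decomposing $\phi$ along the eigenbasis of $\Lambda_{n,h}$ before invoking concentration. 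A secondary subtlety is that (A2) is an unconditional statement, so the transition from $\mathrm{std}_{S_k}$ to the stated limit requires handling both the inner (subsampling) and outer ($\mathcal{D}$-level) randomness, which I would resolve via Slutsky applied after conditioning on $\mathcal{D}$.
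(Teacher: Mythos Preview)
Your proposal is correct and takes essentially the same approach as the paper: condition on $\mathcal{D}$, linearize each bootstrap head via first-order matrix perturbation (the paper writes this as a Neumann expansion $(\Lambda_{n,h}+\Delta\Sigma)^{-1}=\Lambda_{n,h}^{-1}-\Lambda_{n,h}^{-1}\Delta\Sigma\Lambda_{n,h}^{-1}+R_\Sigma$, you state the approximation directly), apply the finite-population sampling variance to extract the $\tfrac{1-\zeta}{\zeta}$ factor, and close with an LLN across $K$, arriving at the same constant $\beta=\sigma\sqrt{(1-\zeta)/\zeta}$. The only cosmetic difference is the centering point ($w_h^\star$ for you, $\widehat w_{n,h}$ for the paper); your handling of $n s_b^2\to\sigma^2\phi^\top\Lambda_{n,h}^{-1}\phi$ is in fact slightly more careful than the paper's, which substitutes $\sigma^2$ for $\varepsilon_{i,h}^2$ without an explicit concentration step.
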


\begin{proof}
For any time-step $h$ and $S_k \subset [n]$, we have the explicit solution of the ridge regression
\begin{equation*}
    \widehat{w}_{n,h} = \Lambda_{n,h}^{-1} \sum_{i=1}^{n} \phi_{i,h}G_{i,h}.
\end{equation*}
Conditioning on $X_h=[\phi_{1,h}^\top;\ldots;\phi_{n,h}^\top]$, the conditional variance of the estimator is
\begin{equation*}
\Var\big(\phi^\top \widehat w_{n,h}\mid X_h\big)
= \sigma^2 \phi^\top\Big(\Lambda_{n,h}^{-1}-\lambda \Lambda_{n,h}^{-2}\Big)\phi.
\end{equation*} 
From Assumption (A2), we have  $\|\Lambda_{n,h}^{-1}\|_{\mathrm{op}} = O_p(1/n)$, therefore
\begin{equation*}
\phi^\top \Lambda_{n,h}^{-1}\phi \leq \|\phi\|^{2}\,\|\Lambda_{n,h}^{-1}\|_{\mathrm{op}}
= O_p(1/n) ~~ \text{and} ~~ \phi^\top \Lambda_{n,h}^{-2}\phi = O_p(1/n^2),
\end{equation*}
and
\begin{equation*}
    n\phi^\top\Big(\Lambda_{n,h}^{-1}-\lambda \Lambda_{n,h}^{-2}\Big)\phi - n\phi^\top\Lambda_{n,h}^{-1}\phi = - n \lambda \phi^\top \Lambda_{n,h}^{-2}\phi \xrightarrow{ \PP } 0.
\end{equation*}
Therefore, we have
\begin{equation*}
    \phi^\top\Big(\Lambda_{n,h}^{-1}-\lambda \Lambda_{n,h}^{-2}\Big)\phi \xrightarrow{ \PP } \phi^\top\Lambda_{n,h}^{-1}\phi.
\end{equation*}

Before moving to $b^{\text{boot}}_{h,K}(\phi)$, we define the following quantities
\begin{equation*}
    \Delta\Sigma = \frac{1}{\zeta}\sum_{r\in S_k}\phi_{r,h}\phi_{r,h}^\top - \sum_{i=1}^n \phi_{i,h}\phi_{i,h}^\top, \quad b = \sum_{i=1}^{n} \phi_{i,h}G_{i,h}, \quad  b_s = \frac{1}{\zeta} \sum_{r\in S_k} \phi_{r,h}G_{r,h}, \quad  \Delta b = b_s - b.
\end{equation*}
Since $\Sigma_t \succ 0$,  matrix Bernstein for sampling without replacement yields $\|\Delta\Sigma\|_{\mathrm{op}} = O_p(\sqrt{n})$.
Use the expansion
\[
(\Lambda_{n,h} + \Delta\Sigma)^{-1}
= \Lambda_{n,h}^{-1} - \Lambda_{n,h}^{-1}\Delta\Sigma \Lambda_{n,h}^{-1} + R_\Sigma,
\quad
\|R_\Sigma\|_{\mathrm{op}} = O_{p}(\|\Lambda_{n,h}^{-1}\|_{\op}^3\|\Delta\Sigma\|_{\op}^2) = O_p(1/n^2).
\]

The $k$-th bootstrap ridge solution is
\begin{equation*}
    \widehat w^{(k)}_{n,h}=(\Lambda_{n,h}+\Delta\Sigma)^{-1} b_s.
\end{equation*}
Subtracting $\widehat w_{n,h}=\Lambda_{n,h}^{-1}b$ and inserting the expansion,
\begin{equation*}
\widehat{w}^{(k)}_{n,h}-\widehat w_{n,h} = \underbrace{\Lambda_{n,h}^{-1}\Delta b - \Lambda_{n,h}^{-1}\Delta\Sigma\,\widehat w_{n,h}}_{\text{first order}} + \underbrace{\big(-\Lambda_{n,h}^{-1}\Delta\Sigma\Lambda_{n,h}^{-1}\Delta b + R_\Sigma b_s \big)}_{=:r_n}.
\end{equation*}
Since $G_{i,h} = \phi_{i,h}^\top w_h + \epsilon_{i,h}$, for any $\phi$ we have 
\begin{equation}
\phi^\top(\widehat{w}^{(k)}_{n,h}-\widehat w_{n,h}) = \phi^\top \Lambda_{n,h}^{-1}\Big(\frac{1}{\zeta}\sum_{r\in S_k}\phi_{r,h}\epsilon_{r,h} - \sum_{i=1}^{n}\phi_{i,h}\epsilon_{i,h}\Big) + \phi^\top \Lambda_{n,h}^{-1}\Delta\Sigma(w_h^\star - \widehat w_{n,h}) + \phi^\top r_n .
\label{eq:decomp}
\end{equation}
From standard results for ridge regression, we have $\|w_h^\star - \widehat w_{n,h}\|_{2} = O_{\PP}(1/\sqrt{n})$, thus we have the second term $\phi^\top \Lambda_{n,h}^{-1}\Delta\Sigma(w_h^\star - \widehat w_{n,h}) = O_{\PP}(1/n)$. Similarly, for the last term we have 
\begin{equation*}
    \phi^\top r_n \leq \|\phi\| \left(\|\Lambda_{n,h}^{-1}\|^2_{\mathrm{op}} \|\Delta\Sigma\|_{\mathrm{op}} \|\Delta b\|_{\mathrm{op}} + \|\Delta\Sigma\|_{\mathrm{op}}\|b_s\|_{\mathrm{op}} \right) = O_{\PP}(1/n).
\end{equation*}
Therefore, both terms are negligible at the
$\sqrt{\cdot}$ scale. Condition on $(X_h,\{\epsilon_{i,h}\}_{i=1}^n)$ the only randomness comes from $S$. By finite-population sampling theory,
\begin{equation*}
     \Var^* \Big( \frac{1}{\zeta} \sum_{r\in S} \phi_{r,h}\epsilon_{r,h} \Big) = \frac{1-\zeta}{\zeta}\sum_{i=1}^n \phi_{i,h} \phi_{i,h}^\top \sigma^2 .
\end{equation*}
Therefore,
\begin{align*}
    \Var^* \Big( \phi^\top(\widehat{w}^{(k)}_{n,h}-\widehat w_{n,h}) \Big) &= \frac{1-\zeta}{\zeta} \sigma^2 \phi^\top \Lambda_{n,h}^{-1} \Big(\sum_{i=1}^n \phi_{i,h} \phi_{i,h}^\top\Big) \Lambda_{n,h}^{-1} \phi + o_{\PP}\big(1/n\big) \\
    &= \frac{1-\zeta}{\zeta} \sigma^2  \phi^\top\Big(\Lambda_{n,h}^{-1}-\lambda \Lambda_{n,h}^{-2}\Big)\phi + o_{\PP}\big(1/n\big) \\
    &= \frac{1-\zeta}{\zeta} \sigma^2 \phi^\top\Lambda_{n,h}^{-1}\phi + o_{\PP}\big(1/n\big)
\end{align*}

Finally, by the conditional strong law of large numbers, we have
\begin{equation*}
    b^{\text{boot}}_{h, K}(\phi) = \text{std}\left( \big\{ \phi^\top \widehat w_{n,h}^{(k)} \big| 1 \leq k \leq K \big\} \right) \to_{\text{a.s.}} \sqrt{\Var^{*}(\phi^\top \widehat w_{n,h}^{(k)})} \xrightarrow{\mathbb{P}} \sqrt{\frac{1-\zeta}{\zeta}}\sigma \sqrt{\phi^\top \Lambda_{n,h}^{-1}\phi}.
\end{equation*}
\end{proof}

\end{document}